\newtheorem{theorem}{Theorem}[subsection]
\newtheorem{proposition}{Proposition}[subsection]
\newtheorem{corollary}{Corollary}[subsection]
\def\mh{{\hbox{-}}}
\newcommand{\combo}{$\mathrm{COMBO}$\xspace}
\newcommand{\GP}{GP\xspace}
\newcommand{\BO}{BO\xspace}
\newcommand{\ARD}{ARD\xspace}
\DeclareMathOperator*{\argmax}{arg\; max}
\DeclareMathOperator*{\argmin}{arg\; min}
\DeclareMathOperator{\x}{\mathbf{x}}
\DeclareMathOperator{\G}{\mathcal{G}}
\DeclareMathOperator{\D}{\mathcal{D}}
\DeclareMathOperator{\V}{\mathcal{V}}
\DeclareMathOperator{\E}{\mathcal{E}}
\newcommand{\ie}{\emph{i.e.}}
\newcommand{\eg}{\emph{e.g.}}
\title{Combinatorial Bayesian Optimization\\using the Graph Cartesian Product}
\author{
    Changyong Oh\textsuperscript{1} 
    Jakub M. Tomczak\textsuperscript{2} 
    Efstratios Gavves\textsuperscript{1} 
    Max Welling\textsuperscript{1,2,3}\\
    \textsuperscript{1} University of Amsterdam 
    \textsuperscript{2} Qualcomm AI Research 
    \textsuperscript{3} CIFAR\\
    \texttt{C.Oh@uva.nl, jtomczak@qti.qualcomm.com, egavves@uva.nl, m.welling@uva.nl}
}
\begin{document}

\maketitle

\begin{abstract}
This paper focuses on Bayesian Optimization (\BO) for objectives on combinatorial search spaces, including ordinal and categorical variables.
Despite the abundance of potential applications of Combinatorial \BO, including chipset configuration search and neural architecture search, only a handful of methods have been proposed.
We introduce \combo, a new Gaussian Process (\GP) \BO. 
\combo quantifies ``smoothness'' of functions on combinatorial search spaces by utilizing a \emph{combinatorial graph}.
The vertex set of the combinatorial graph consists of all possible joint assignments of the variables, while edges are constructed using the graph Cartesian product of the sub-graphs that represent the individual variables. 
On this combinatorial graph, we propose an ARD diffusion kernel with which the \GP is able to model high-order interactions between variables leading to better performance.
Moreover, using the Horseshoe prior for the scale parameter in the \ARD diffusion kernel results in an effective variable selection procedure, making \combo suitable for high dimensional problems.
Computationally, in \combo the graph Cartesian product allows the Graph Fourier Transform calculation to scale linearly instead of exponentially.
We validate \combo in a wide array of realistic benchmarks, including weighted maximum satisfiability problems and neural architecture search.
\combo outperforms consistently the latest state-of-the-art while maintaining computational and statistical efficiency.
\end{abstract}

\section{Introduction}
\label{sect:introduction}
This paper focuses on Bayesian Optimization (\BO)~\citep{snoek2012practical} for objectives on combinatorial search spaces consisting of ordinal or categorical variables.
%
%
Combinatorial BO~\citep{jones1998efficient} has many applications, including finding optimal chipset configurations, discovering the optimal architecture of a deep neural network or the optimization of compilers to embed software on hardware optimally.
%
%
All these applications, where Combinatorial BO is potentially useful, feature the following properties.
They \emph{(i)} have black-box objectives for which gradient-based optimizers~\citep{wilson2014using} are not amenable, \emph{(ii)} have expensive evaluation procedures for which methods with low sample efficiency such as, evolution ~\citep{freitas2009review} or genetic~\citep{davis1991handbook} algorithms are unsuitable, and \emph{(iii)} have noisy evaluations and highly non-linear objectives for which simple and exact solutions are inaccurate~\citep{brochu2010tutorial, frazier2018tutorial, shahriari2015taking}.

Interestingly, most BO methods in the literature have focused on continuous~\citep{movckus1975bayesian} rather than combinatorial search spaces.
One of the reasons is that the most successful BO methods are built on top of Gaussian Processes ({\GP}s) \citep{kandasamy2016gaussian, oh2018bock, snoek2012practical}.
As GPs rely on the smoothness defined by a kernel to model uncertainty~\citep{rasmussen2006gaussian}, they are originally proposed for, and mostly used in, continuous input spaces.
In spite of the presence of kernels proposed on combinatorial structures ~\citep{haussler1999convolution, kondor2002diffusion, smola2003kernels},
%
%
to date the relation between the smoothness of graph signals and the smoothness of functions defined on combinatorial structures has been overlooked and not been exploited for BO on combinatorial structures.
A simple solution is to use continuous kernels and round them up.
This rounding, however, is not incorporated when computing the covariances at the next BO iteration~\citep{garrido2018dealing}, leading to unwanted artifacts.
Furthermore, when considering combinatorial search spaces the number of possible configurations quickly explodes: for $M$ categorical variables with $k$ categories the number of possible combinations scales with $\mathcal{O}(k^M)$.
%
%
Applying BO with GPs on combinatorial spaces is, therefore, not straightforward.

We propose \combo, a novel Combinatorial BO designed to tackle the aforementioned problems of lack of smoothness and computational complexity on combinatorial structures.
To introduce smoothness of a function on combinatorial structures, we propose the \emph{combinatorial graph}.
The combinatorial graph comprises sub-graphs --one per categorical (or ordinal) variable-- later combined by the graph Cartesian product.
The combinatorial graph contains as vertices all possible combinatorial choices.
We define then smoothness of functions on combinatorial structures to be the smoothness of graph signals using the Graph Fourier Transform (GFT)~\citep{ortega2018graph}.
Specifically, we propose as our \GP kernel on the graph a variant of the diffusion kernel, the automatic relevance determination(ARD) diffusion kernel, for which computing the GFT is computationally tractable via a decomposition of the eigensystem.
With a \GP on a graph \combo accounts for arbitrarily high order interactions between variables.
Moreover, using the sparsity-inducing Horseshoe prior~\citep{carvalho2009handling} on the ARD parameters \combo performs variable selection and scales up to high-dimensional.
\combo allows for accurate, efficient and large-scale BO on combinatorial search spaces.




In this work, we make the following contributions.
First, we show how to introduce smoothness on combinatorial search spaces by introducing combinatorial graphs.
On top of a combinatorial graph we define a kernel using the GFT.
Second, we present an algorithm for Combinatorial BO that is computationally scalable to high dimensional problems.
Third, we introduce individual scale parameters for each variable making the diffusion kernel more flexible.
When adopting a sparsity inducing Horseshoe prior~\citep{carvalho2009handling, carvalho2010horseshoe}, \combo performs variable selection which makes it scalable to high dimensional problems.
We validate \combo extensively on \emph{(i)} four numerical benchmarks, as well as two realistic test cases: \emph{(ii)} the weighted maximum satisfiability problem \citep{hansen1990algorithms, resende1997approximate}, where one must find boolean values that maximize the combined weights of satisfied clauses, that can be made true by turning on and off the variables in the formula, \emph{(iii)} neural architecture search~\citep{elsken2018neural, wistuba2019survey}.
Results show that \combo consistently outperforms all competitors.

\section{Method}
\subsection{Bayesian optimization with Gaussian processes}
\vspace{-4pt}

Bayesian optimization (\BO) aims at finding the global optimum of a black-box function $f$ over a search space $\mathcal{X}$, namely, $\x_{opt}=\argmin_{\x \in \mathcal{X}} f(\x)$. At each round, a surrogate model attempts to approximate $f(\x)$ based on the evaluations so far, $\D = \{(\x_i, y_i=f(\x_i))\}$.
Then an acquisition function suggests the most promising point $\x_{i+1}$ that should be evaluated.
The $\D$ is appended by the new evaluation, $\D=\D\cup(\{\x_{i+1}, y_{i+1})\}$. 
The process repeats until the evaluation budget is depleted. 

The crucial design choice in \BO is the surrogate model that models $f(\cdot)$ in terms of \emph{(i)} a predictive mean to predict $f(\cdot)$, and \emph{(ii)} a predictive variance to quantify the prediction uncertainty.
With a GP surrogate model, we have the predictive mean $\mu(\x_* \vert \D) = K_{*\D} (K_{\D\D} + \sigma_n^2 I)^{-1} \mathbf{y}$ and variance $\sigma^2(\x_* \vert \D) = K_{**} - K_{*\D} (K_{\D\D} + \sigma_n^2 I)^{-1} K_{\D*}$
%
%
where $K_{**}=K(\x_*, \x_*)$, $[K_{*\D}]_{1,i} = K(\x_*, \x_i)$, $K_{\D*} = (K_{*\D})^T$, $[K_{\mathcal{DD}}]_{i,j} = K(\x_i, \x_j)$ and $\sigma_n^2$ is the noise variance.



\subsection{Combinatorial graphs and kernels}
\label{sect:bo_discrete}
\vspace{-4pt}

In BO on continuous search spaces the most popular surrogate models rely on GPs \citep{kandasamy2016gaussian, oh2018bock, snoek2012practical}.
Their popularity does not extend to combinatorial spaces, although kernels on combinatorial structures have also been proposed~\citep{haussler1999convolution, kondor2002diffusion, smola2003kernels}.
%
%
To design an effective GP-based \BO algorithm on combinatorial structures, a space of smooth functions --defined by the GP-- is needed.
%
%
We circumvent this requirement by the notion of the combinatorial graph defined as a graph, which contains all possible combinatorial choices as its vertices for a given combinatorial problem.
%
%
That is, each vertex corresponds to a different joint assignment of categorical or ordinal variables.
If two vertices are connected by an edge, then their respective set of combinatorial choices differ only by a single combinatorial choice.
%
%
As a consequence, we can now revisit the notion of smoothness on combinatorial structures as smoothness of a graph signal~\citep{chung1996spectral, ortega2018graph} defined on the combinatorial graph.
On a combinatorial graph, the shortest path is closely related to the Hamming distance.


%
%

\paragraph{The combinatorial graph}
%
To construct the combinatorial graph, we first define one sub-graph per combinatorial variable $C_i$, $\G(C_i)$.
For a categorical variable $C_i$, the sub-graph $\G(C_i)$ is chosen to be a complete graph while for an ordinal variable we have a path graph.
We aim at building a search space for combinatorial choices, \ie, a combinatorial graph, by combining sub-graphs $G(C_i)$ in such way that a distance between two adjacent vertices corresponds to a change of a value of a single combinatorial variable.
It turns out that the graph Cartesian product~\citep{hammack2011handbook} ensures this property.
Then, the graph Cartesian product of subgraphs $\G(C_j) = (\V_j, \E_j)$ is defined as $\G = (\V, \E) = \square_i \G(C_i)$, where $\V = \times_i \V_i$ and $(v_1=(c_1^{(1)}, \cdots, c_N^{(1)}),  v_2=(c_1^{(2)}, \cdots, c_N^{(2)})) \in \E$ if and only if $\exists j$ such that $\forall i\neq j$~$c_i^{(1)}=c_i^{(2)}$ and $(c_j^{(1)}, c_j^{(2)}) \in \E_j$.

As an example, let us consider a simplistic hyperparameter optimization problem for learning a neural network with three combinatorial variables: \emph{(i)} the batch size, $c_1 \in C_1 = \{16, 32, 64\}$, \emph{(ii)}~the optimizer $c_2 \in C_2 = \{AdaDelta, RMSProp, Adam\}$ and \emph{(iii)} the learning rate annealing $c_3 \in C_3 = \{Constant, Annealing\}$.
The sub-graphs $\{\G(C_i)\}_{i=1,2,3}$ for each of the combinatorial variables, as well as the final combinatorial graph after the graph Cartesian product, are illustrated in Figure~\ref{fig:combinatorial_graph}.
For the ordinal batch size variable we have a path graph, whereas for the categorical optimizer and learning rate annealing variables we have complete graphs.
The final combinatorial graph contains all possible combinations for batch size, optimizer and learning rate annealing.

\begin{figure}[!h]
    \centering
    \vskip -0.35cm
    \includegraphics[width=0.9\textwidth]{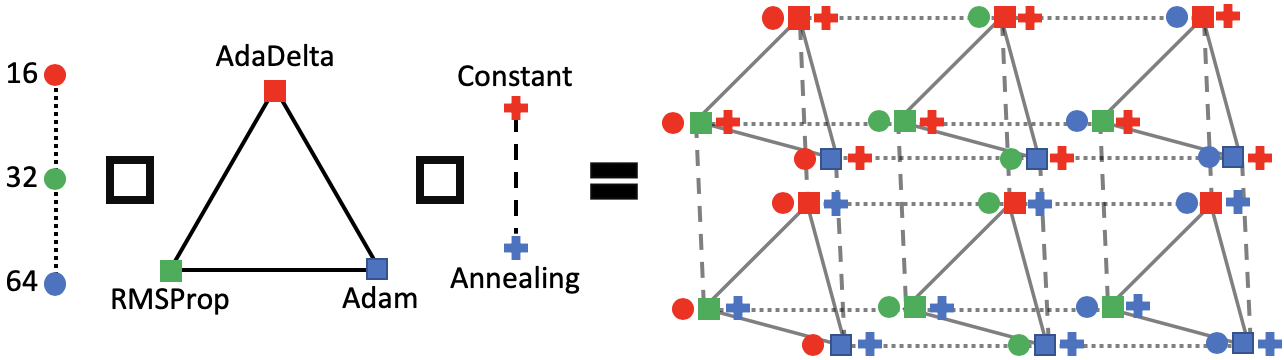}
    \vskip -0.3cm
    \caption{Combinatorial Graph: graph Cartesian product of sub-graphs $\G(C_1) \square \G(C_2) \square \G(C_3)$}
    \label{fig:combinatorial_graph}
    \vskip -0.4cm
\end{figure}



\paragraph{Cartesian product and Hamming distance}
The Hamming distance is a natural choice of distance on categorical variables.
With all complete sub-graphs, the shortest path between two vertices in the combinatorial graph is exactly equivalent to the Hamming distance between the respective categorical choices.
\begin{theorem}\label{thm:shortest_path_hamming}
Assume a combinatorial graph $\mathcal{G}=(\mathcal{V}, \mathcal{E})$ constructed from categorical variables, $C_1, \ldots, C_N$, that is, $\G$ is a graph Cartesian product $\square_i\G(C_i)$ of complete sub-graphs $\{\G(C_i)\}_i$.
Then the shortest path $s(v_1, v_2;\mathcal{G})$ between vertices $v_1=(c_1^{(1)}, \cdots, c_N^{(1)}),  v_2=(c_1^{(2)}, \cdots, c_N^{(2)}) \in \V$ on $\mathcal{G}$ is equal to the Hamming distance between $(c_1^{(1)}, \cdots, c_N^{(1)})$ and $(c_1^{(2)}, \cdots, c_N^{(2)})$.
\end{theorem}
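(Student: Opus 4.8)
The plan is to establish both inequalities $s(v_1,v_2;\G) \le h$ and $s(v_1,v_2;\G) \ge h$, where $h$ denotes the Hamming distance between the two tuples, i.e. $h = |\{j : c_j^{(1)} \neq c_j^{(2)}\}|$. The key structural observation, which I would isolate first, is that the edge condition in the definition of the graph Cartesian product, combined with the completeness of each $\G(C_i)$, means that two vertices are adjacent in $\G$ \emph{if and only if} they differ in exactly one coordinate. Indeed, the definition requires agreement on all coordinates but one, say $j$, together with $(c_j^{(1)}, c_j^{(2)}) \in \E_j$; since $\G(C_j)$ is complete, this last condition holds automatically whenever $c_j^{(1)} \neq c_j^{(2)}$.

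For the upper bound, I would construct an explicit path of length $h$. Enumerate the differing coordinates $j_1, \ldots, j_h$ and change them one at a time: starting from $v_1$, at step $t$ replace the value in coordinate $j_t$ from $c_{j_t}^{(1)}$ by $c_{j_t}^{(2)}$, leaving all other coordinates untouched. By the observation above each such single-coordinate change is a legal edge of $\G$, and after $h$ steps the resulting vertex agrees with $v_2$ in every coordinate. Hence $s(v_1,v_2;\G) \le h$.

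For the lower bound, which I expect to be the part requiring the most care, I would argue via projections onto coordinates. Consider any walk $v_1 = u_0, u_1, \ldots, u_\ell = v_2$ in $\G$. Every edge of the walk changes the value of exactly one coordinate and leaves the others fixed. For a coordinate $j$ with $c_j^{(1)} \neq c_j^{(2)}$, the sequence of $j$-th entries along the walk begins at $c_j^{(1)}$ and ends at the different value $c_j^{(2)}$, so at least one edge of the walk must be one that changes coordinate $j$. Because each edge changes a single coordinate, the sets of edges responsible for the $h$ distinct differing coordinates are pairwise disjoint, so the walk contains at least $h$ edges, giving $\ell \ge h$. Taking the minimum over all walks yields $s(v_1,v_2;\G) \ge h$, and combining the two bounds gives the claimed equality.

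The only genuinely delicate point is making the disjointness step in the lower bound precise, namely that assigning one edge per differing coordinate never double-counts; this follows cleanly from the single-coordinate-change property of edges established at the outset, and everything else is routine.
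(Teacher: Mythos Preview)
Your proof is correct and follows essentially the same idea as the paper's own argument: both hinge on the observation that, by the Cartesian-product edge rule together with completeness of each $\G(C_i)$, adjacency in $\G$ is exactly a single-coordinate change, so traversing from $v_1$ to $v_2$ requires (and can be done with) one edge per differing coordinate. Your write-up is in fact more careful than the paper's, which states the conclusion $s(v_1,v_2;\G)=\sum_i \mathds{1}[c_i^{(1)}\neq c_i^{(2)}]$ directly from the single-coordinate-change property without explicitly separating the upper and lower bounds as you do.
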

\vspace{-8pt}
\begin{proof}
The proof of Theorem \ref{thm:shortest_path_hamming} could be found in Supp.~\ref{supp:sec_cartesian}
\end{proof}
\vspace{-4pt}
When there is a sub-graph which is not complete, the below result follows from the Thm. \ref{thm:shortest_path_hamming}:
\begin{corollary}
If a sub-graph is not a complete graph, then the shortest path is equal to or bigger than the Hamming distance. 
\end{corollary}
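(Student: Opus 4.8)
The plan is to reduce the corollary to Theorem~\ref{thm:shortest_path_hamming} by a monotonicity argument on edge sets: completing every sub-graph only \emph{adds} edges to the combinatorial graph, and adding edges can never lengthen a shortest path.

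First, I would form a comparison graph $\G^\star = (\V, \E^\star) = \square_i K_i$, where $K_i$ denotes the complete graph on the vertex set $\V_i$ of $\G(C_i)$. Since $\G^\star$ and $\G$ share the same vertex set $\V = \times_i \V_i$, their shortest-path distances are directly comparable vertex-by-vertex.

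Second, I would check the edge containment $\E \subseteq \E^\star$ straight from the Cartesian-product definition. Any edge of $\G$ joins vertices $v_1, v_2$ that agree in all coordinates but one index $j$, with $(c_j^{(1)}, c_j^{(2)}) \in \E_j$. Because $\E_j \subseteq \E(K_j)$, the identical pair also satisfies the edge rule for $\G^\star$, hence $(v_1, v_2) \in \E^\star$. Thus every walk in $\G$ is a walk in $\G^\star$, giving $s(v_1, v_2; \G) \geq s(v_1, v_2; \G^\star)$ for all $v_1, v_2 \in \V$. Since every factor of $\G^\star$ is complete, Theorem~\ref{thm:shortest_path_hamming} applies verbatim and yields $s(v_1, v_2; \G^\star) = d_H(v_1, v_2)$, the Hamming distance. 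Chaining the two relations gives $s(v_1, v_2; \G) \geq d_H(v_1, v_2)$, which is exactly the corollary.

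The only step that is not purely mechanical is the edge-containment check, which rests on reading the Cartesian-product edge condition carefully and observing that completing a factor preserves all pre-existing edges while merely enlarging $\E_j$ to $\E(K_j)$; the distance monotonicity under edge addition and the appeal to the theorem are then immediate. As a sanity check one sees where strictness enters: if $\G(C_j)$ is not complete there are non-adjacent distinct $a, b \in \V_j$, so any pair of vertices differing only in coordinate $j$ via $a, b$ has shortest path $\geq 2$ in $\G$ but Hamming distance $1$, making the inequality strict on such pairs. A fully equivalent alternative avoids $\G^\star$ by invoking the additive decomposition of distance over Cartesian factors, $s(v_1, v_2; \G) = \sum_i s(c_i^{(1)}, c_i^{(2)}; \G(C_i))$, and bounding each factor distance below by $\mathbf{1}[c_i^{(1)} \neq c_i^{(2)}]$; I prefer the edge-addition route since it uses Theorem~\ref{thm:shortest_path_hamming} as a black box, matching the corollary's phrasing that the result ``follows from'' it.
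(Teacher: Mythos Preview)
Your proposal is correct. The paper gives no explicit proof of the corollary at all---it merely asserts that the result ``follows from'' Theorem~\ref{thm:shortest_path_hamming}---so your edge-monotonicity reduction to the fully-complete comparison graph $\G^\star$ is precisely the natural way to cash out that remark, and your alternative via the additive distance decomposition over Cartesian factors would work equally well.
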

\vspace{-4pt}
The combinatorial graph using the graph Cartesian product is a natural search space for combinatorial variables that can encode a widely used metric on combinatorial variables like Hamming distance.

\vspace{-4pt}
\paragraph{Kernels on combinatorial graphs.}
In order to define the GP surrogate model for a combinatorial problem, we need to specify a a proper kernel on a combinatorial graph $\G = (\V, \E)$.
The role of the surrogate model is to smoothly interpolate and extrapolate neighboring data.
To define a smooth function on a graph, \ie, a smooth graph signal $f: \V \mapsto \mathbb{R}$, we adopt Graph Fourier Transforms (GFT) from graph signal processing~\citep{ortega2018graph}.
Similar to Fourier analysis on Euclidean spaces, GFT can represent any graph signal as a linear combination of graph Fourier bases. Suppressing the high frequency modes of the eigendecomposition approximates the signal with a smooth function on the graph.
We adopt the diffusion kernel which penalizes basis-functions in accordance with the magnitude of the frequency~\citep{kondor2002diffusion, smola2003kernels}.

To compute the diffusion kernel on the combinatorial graph $\G$, we need the eigensystem of graph Laplacian $L(\G) = \mathbf{D}_{\G} - \mathbf{A}_{\G}$, where $\mathbf{A}_{\G}$ is the adjacency matrix and $\mathbf{D}_{\G}$ is the degree matrix of the graph $\G$.
The eigenvalues $\{\lambda_1, \lambda_2, \cdots, \lambda_{\vert \V \vert}\}$ and eigenvectors $\{u_1, u_2, \cdots, u_{\vert \V \vert}\}$ of the graph Laplacian $L(\G)$ are the graph Fourier frequencies and bases, respectively.
Eigenvectors paired with large eigenvalues correspond to high-frequency Fourier bases.
The diffusion kernel is defined as
\vspace{-4pt}
\begin{equation}
    k([p],[q]\vert \beta) = \sum\nolimits_{i=1}^n e^{-\beta\lambda_{i}} u_{i} ([p]) u_{i} ([q]),
\end{equation}
from which it is clear that higher frequencies, $\lambda_i \gg 1$, are penalized more.
In a matrix form, with $\Lambda = diag(\lambda_1, \cdots, \lambda_{\vert \V \vert})$ and $\mathbf{U} = [u_1, \cdots, u_{\vert \V \vert}]$, the kernel takes the following form:
\vspace{-4pt}
\begin{equation} \label{eq:diffusion_kernel_matrix_form}
    K(\V, \V) = \mathbf{U} \exp(-\beta\Lambda) \mathbf{U}^T,
\end{equation}
which is the Gram matrix on all vertices whose submatrix is the Gram matrix for a subset of vertices.

\subsection{Scalable combinatorial Bayesian optimization with the graph Cartesian product}
\label{subsec:cartesian_product}

The direct computation of the diffusion kernel is infeasible because it involves the eigendecomposition of the Laplacian $L(\G)$, an operation with cubic complexity with respect to the number of vertices $\vert \V \vert$.
As we rely on the graph Cartesian product $\square_i \G_i$ to construct our combinatorial graph, we can take advantage of its properties and dramatically increase the efficiency of the eigendecomposition of the Laplacian $L(\G)$.
Further, due to the construction of the combinatorial graph, we can propose a variant of the diffusion kernel: automatic relevance determination (ARD) diffusion kernel.
The ARD diffusion kernel has more flexibility in its modeling capacity.
Moreover, in combination with the sparsity-inducing Horseshoe prior~\citep{carvalho2009handling} the ARD diffusion kernel performs variable selection automatically that allows to scale to high dimensional problems.

\paragraph{Speeding up the eigendecomposition with graph Cartesian products.}

Direct computation of the eigensystem of the Laplacian $L(\G)$ naively is infeasible, even for problems of moderate size. 
For instance, for 15 binary variables, eigendecomposition complexity is $O(\vert \V \vert^3)=(2^{15})^3$. 

The graph Cartesian product allows to improve the scalability of the eigendecomposition.
The Laplacian of the Cartesian product of two sub-graphs $\mathcal{G}_1$ and $\mathcal{G}_2$, $\mathcal{G}_1 \square \mathcal{G}_2$, can be algebraically expressed using the Kronecker product $\otimes$ and the Kronecker sum $\oplus$ \citep{hammack2011handbook}:
\vspace{-2pt}
\begin{equation}\label{eq:laplacian_cartesian}
    L(\G_1 \square \G_2) = L(\G_1) \oplus L(\G_2) = L(\G_1) \otimes \mathbf{I}_{1} + \mathbf{I}_{2} \otimes L(\G_2) ,
\end{equation}
where $\mathbf{I}$ denotes the identity matrix.
Considering the eigensystems $\{(\lambda_i^{(1)}, u_i^{(1)})\}$ and $\{(\lambda_j^{(2)}, u_j^{(2)})\}$ of $\mathcal{G}_1$ and $\mathcal{G}_2$, respectively, the eigensystem of $\mathcal{G}_1 \square \mathcal{G}_2$ is $\{(\lambda_i^{(1)} + \lambda_j^{(2)}, u_i^{(1)} \otimes u_j^{(2)})\}$.
Given Eq. \eqref{eq:laplacian_cartesian} and matrix exponentiation, for the diffusion kernel of $m$ categorical (or ordinal) variables we have 
\vspace{-2pt}
\begin{equation}\label{eq:cartesian_diffusion_kernel}
    \mathbf{K} = \exp\big(-\beta\; \bigoplus\nolimits_{i=1}^{m} L(\G_i)\big) = \bigotimes\nolimits_{i=1}^{m} \exp\big(-\beta\; L(\G_i)\big) .
\end{equation}
This means we can compute the kernel matrix by calculating the Kronecker product per sub-graph kernel.
Specifically, we obtain the kernel for the $i$-th sub-graph from the eigendecomposition of its Laplacian as per eq.~\eqref{eq:diffusion_kernel_matrix_form}.

Importantly, the decomposition of the final kernel into the Kronecker product of individual kernels in Eq.~\eqref{eq:cartesian_diffusion_kernel} leads to the following proposition.

\begin{proposition} \label{prop:efficiency}
Assume a graph $\mathcal{G}=(\mathcal{V}, \mathcal{E})$ is the graph cartesian product of sub-graphs $\G=\square_i, \G_i$. The graph Fourier Transform of $\G$ can be computed in $O(\sum_{i=1}^{m}|\mathcal{V}_i|^3)$ while the direct computation takes $O(\prod_{i=1}^{m}|\mathcal{V}_i|^3$).
\end{proposition}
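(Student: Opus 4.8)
The plan is to reduce the statement to a comparison of two eigendecomposition costs, interpreting the cost of the GFT as the cost of eigendecomposing the relevant graph Laplacian, since by construction the GFT bases and frequencies of $\mathcal{G}$ are precisely the eigenvectors and eigenvalues of $L(\mathcal{G})$. For the direct computation the argument is immediate: $\mathcal{G}$ has $|\mathcal{V}| = \prod_{i=1}^m |\mathcal{V}_i|$ vertices, so $L(\mathcal{G})$ is a $|\mathcal{V}| \times |\mathcal{V}|$ matrix whose full dense eigendecomposition costs $O(|\mathcal{V}|^3) = O(\prod_{i=1}^m |\mathcal{V}_i|^3)$ by the standard cubic complexity of eigensolvers, which is the second bound.

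For the factored computation I would appeal to the eigensystem identity already recorded for the two-factor case below Eq.~\eqref{eq:laplacian_cartesian}: given the eigenpairs $\{(\lambda_i^{(1)}, u_i^{(1)})\}$ and $\{(\lambda_j^{(2)}, u_j^{(2)})\}$ of $\mathcal{G}_1$ and $\mathcal{G}_2$, the eigenpairs of $\mathcal{G}_1 \square \mathcal{G}_2$ are exactly $\{(\lambda_i^{(1)} + \lambda_j^{(2)}, u_i^{(1)} \otimes u_j^{(2)})\}$. Since the Cartesian product is associative, $\square_{i=1}^m \mathcal{G}_i = (\square_{i=1}^{m-1} \mathcal{G}_i) \square \mathcal{G}_m$, so a short induction lifts this to $m$ factors: the eigenvalues of $L(\mathcal{G})$ are all sums $\sum_{i=1}^m \lambda_{k_i}^{(i)}$ and the eigenvectors all tensor products $\bigotimes_{i=1}^m u_{k_i}^{(i)}$. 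Consequently the only genuinely expensive step, the eigendecomposition, is performed on each sub-Laplacian $L(\mathcal{G}_i)$, a matrix of size $|\mathcal{V}_i| \times |\mathcal{V}_i|$ costing $O(|\mathcal{V}_i|^3)$; summing over the $m$ factors gives $O(\sum_{i=1}^m |\mathcal{V}_i|^3)$, the first bound.

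The one point I would be careful about is exactly what ``computing the GFT'' is charged for. Naively, writing out the complete eigensystem of $\mathcal{G}$ would require enumerating $\prod_i |\mathcal{V}_i|$ eigenvalues and materializing eigenvectors of length $\prod_i |\mathcal{V}_i|$, and such enumeration need not be dominated by $\sum_i |\mathcal{V}_i|^3$. The resolution I would state explicitly is that the factored form is never expanded: by Eq.~\eqref{eq:cartesian_diffusion_kernel} the diffusion kernel itself is the Kronecker product of the per-sub-graph kernels, so the algorithm stores and operates on the small factors $\exp(-\beta L(\mathcal{G}_i))$ (equivalently, the small eigensystems) rather than on the full $|\mathcal{V}| \times |\mathcal{V}|$ object. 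The cubic cost the proposition counts is therefore incurred only on the sub-graphs, and this book-keeping, rather than any hard inequality, is the crux of the argument; the remaining steps (associativity, the inductive lifting of the two-factor eigensystem, and the arithmetic of cubic complexities) are routine.
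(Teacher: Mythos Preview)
Your proposal is correct and follows essentially the same route as the paper: both identify the GFT with eigendecomposition of the Laplacian, invoke the standard cubic cost for the direct computation on $\prod_i |\mathcal{V}_i|$ vertices, and use the Kronecker-sum eigensystem identity (lifted by associativity to $m$ factors) to reduce to per-sub-graph eigendecompositions summing to $O(\sum_i |\mathcal{V}_i|^3)$. Your third paragraph is in fact more scrupulous than the paper's own proof, which simply asserts the combination step and defers the overhead question to a separate remark comparing it to the direct $O(\prod_i |\mathcal{V}_i|^3)$ cost; your observation that Eq.~\eqref{eq:cartesian_diffusion_kernel} is what licenses never materializing the full eigensystem is the right way to close that gap.
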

\vspace{-6pt}
\noindent\emph{Proof.} The proof of Proposition \ref{prop:efficiency} could be found in the Supp.~\ref{supp:sec_cartesian}.

\paragraph{Variable-wise edge scaling.}
We can make the kernel more flexible by considering individual scaling factors $\{\beta_i\}$, a single $\beta_i$ for each variable.
The diffusion kernel then becomes:
\vspace{-4pt}
\begin{equation}\label{eq:ard_cartesian_diffusion_kernel}
    \mathbf{K} = \exp\big{(}\bigoplus\nolimits_{i=1}^{m} -\beta_i\;  L(\G_i)\big{)} = \bigotimes\nolimits_{i=1}^{m} \exp\big(-\beta_i\; L(\G_i)\big) ,
\end{equation}
where $\beta_i \ge 0$ for $i=1, \ldots, m$.
Since the diffusion kernel is a discrete version of the exponential kernel, the application of the individual $\beta_i$ for each variable is equivalent to the ARD kernel \citep{mackay1994bayesian, neal1995bayesian}.
Hence, we can perform variable (sub-graph) selection automatically.
We refer to this kernel as the \textit{ARD diffusion kernel}.

\paragraph{Prior on $\beta_i$.}
%
To determine $\beta_i$, and to prevent GP with ARD kernel from overfitting, we apply posterior sampling with a Horseshoe prior~\citep{carvalho2009handling} on the $\{\beta_i\}$.
The Horseshoe prior encourages sparsity, and, thus, enables variable selection, which, in turn, makes \combo statistically scalable to high dimensional problems.
For instance, if $\beta_i$ is set to zero, then $L(\G_i)$ does not contribute in Eq~\eqref{eq:ard_cartesian_diffusion_kernel}.
%
%

\subsection{\combo~algorithm}


We present the \combo approach in Algorithm~\ref{alg:combo}. More details about \combo could be found in the Supp. Sections \ref{supp:sec_surrogate} and \ref{supp:sec_acquisition}.

\setlength{\textfloatsep}{4pt}
\begin{algorithm}[tb]
  \caption{$\mathrm{COMBO}$: Combinatorial Bayesian Optimization on the combinatorial graph}
  \label{alg:combo}
\begin{algorithmic}[1]
  \STATE {\bfseries Input: $N$ combinatorial variables $\{C_i\}_{i=1,\cdots,N}$ } 
  \STATE Set a search space and compute Fourier frequencies and bases: \# See Sect. \ref{sect:bo_discrete}\\
  \STATE $\rhd$ Set sub-graphs $\G(C_i)$ for each variables $C_i$.\\
  \vspace{-3pt}
  \STATE $\rhd$ Compute eigensystem $\{(\lambda_k^{(i)}, u_k^{(i)})\}_{i,k}$ for each sub-graph $\G(C_i)$\\
  \STATE $\rhd$ Construct the combinatorial graph $\G=(\V,\E)=\square_i \G(C_i)$ using graph Cartesian product.\\
  \STATE Initialize $\mathcal{D}$.
  \REPEAT
    \STATE Fit GP using ARD diffusion kernel to $\D$ with slice sampling : $\mu(v_* \vert \D), \sigma^2(v_* \vert \D)$
    \STATE Maximize acquisition function : $v_{next} = \argmax_{v_* \in \V} a(\mu(v_* \vert \D), \sigma^2(v_* \vert \D))$
    \STATE Evaluate $f(v_{next})$, append to $\D = \D \cup \{(v_{next}, f(v_{next}))\}$
  \UNTIL{stopping criterion}
\end{algorithmic}
\end{algorithm}

We start the algorithm with defining all sub-graphs. Then, we calculate GFT (line 4 of Alg.~\ref{alg:combo}), whose result is needed to compute the ARD diffusion kernel, which could be sped up due to the application of the graph Cartesian product. Next, we fit the surrogate model parameters using slice sampling~\citep{murray2010slice, neal2003slice} (line 8).
Sampling begins with 100 steps of the burn-in phase. 
With the updated $\D$ of evaluated data, 10 points are sampled without thinning.
More details on the surrogate model fitting are given in Supp.~\ref{supp:sec_surrogate}.

Last, we maximize the acquisition function to find the next point for evaluation (line 9).
For this purpose, we begin with evaluating 20,000 randomly selected vertices. 
Twenty vertices with highest acquisition values are used as initial points for acquisition function optimization.
We use the breadth-first local search (BFLS), where at a given vertex we compare acquisition values on adjacent vertices.
We then move to the vertex with the highest acquisition value and repeat until no acquisition value on adjacent vertices are higher than the acquisition value at the current vertex.
BFLS is a local search, however, the initial random search and multi-starts help to escape from local minima.
In experiments (Supp.~\ref{supp:subsec_nonlocal}) we found that BFLS performs on par or better than non-local search, while being more efficient.

In our framework we can use any acquisition function like $\mathrm{GP\mh UBC}$, the Expected Improvement ($\mathrm{EI}$) \citep{rasmussen2006gaussian}, the predictive entropy search \citep{hernandez2014predictive} or knowledge gradient \citep{wu2017bayesian}.
We opt for $\mathrm{EI}$ that generally works well in practice \citep{shahriari2015taking}.

\section{Related work}
While for continuous inputs, $\mathcal{X} \subseteq \mathbb{R}^{D}$, there exist efficient algorithms to cope with high-dimensional search spaces using Gaussian processes(GPs) \citep{oh2018bock} or neural networks \citep{SRSKSSPPA:15}, few Bayesian Optimization(BO) algorithms have been proposed for combinatorial search spaces~\citep{baptista2018bayesian, bergstra2013making, hutter2011sequential}. 

A basic BO approach to combinatorial inputs is to represent all combinatorial variables using one-hot encoding and treating all integer-valued variables as values on a real line.
Further, for the integer-valued variables an acquisition function considers the closest integer for the chosen real value. 
This approach is used in Spearmint \citep{snoek2012practical}.
However, applying this method naively may result in severe problems, namely, the acquisition function could repeatedly evaluate the same points due to rounding real values to an integer and the one-hot representation of categorical variables. 
As pointed out in \citep{garrido2018dealing}, this issue could be fixed by making the objective constant over regions of input variables for which the actual objective has to be evaluated. 
The method was presented on a synthetic problem with two integer-valued variables, and a problem with one categorical variable and one integer-valued variable. 
Unfortunately, it remains unclear whether this approach is suitable for high-dimensional problems. 
Additionally, the proposed transformation of the covariance function seems to be better suited for ordinal-valued variables rather than categorical variables, further restricting the utility of this approach. 
In contrast, we propose a method that can deal with high-dimensional combinatorial (categorical and/or ordinal) spaces. 

Another approach to combinatorial optimization was proposed in BOCS~\citep{baptista2018bayesian} where the sparse Bayesian linear regression was used instead of GPs. 
The acquisition function was optimized by a semi-definite programming or simulated annealing that allowed to speed up the procedure of picking new points for next evaluations.
However, BOCS has certain limitations which restrict its application mostly to problems with low order interactions between variables.
BOCS requires users to specify the highest order of interactions among categorical variables, which inevitably ignores interaction terms of orders higher than the user-specified order.
Moreover, due to its parametric nature, the surrogate model of BOCS has excessively large number of parameters even for moderately high order (\eg, up to the 4th or 5th order).
Nevertheless, this approach achieved state-of-the-art results on four high-dimensional binary optimization problems. 
Different from \citep{baptista2018bayesian}, we use a non-parametric regression, \ie, {GP}s and perform variable selection both of which give more statistical efficiency.

%
%

\section{Experiments}
We evaluate \combo on two binary variable benchmarks, one ordinal and one multi-categorical variable benchmarks, as well as in two realistic problems: weighted Maximum Satisfiability and Neural Architecture Search.
We convert all into minimization problems.
We compare $\mathrm{SMAC}$~\citep{hutter2011sequential}, $\mathrm{TPE}$~\citep{bergstra2013making}, Simulated Annealing ($\mathrm{SA}$)~\citep{spears1993simulated}, as well as with $\mathrm{BOCS}$ ($\mathrm{BOCS\mh SDP}$ and $\mathrm{BOCS\mh SA3}$)\footnote{We exclude BOCS from ordinal/multi-categorical experiments, because at the time of the paper submission the open source implementation provided by the authors did not support ordinal/multi-categorical variables. For the explanation on how to use BOCS for ordinal/multi-categorical variables, please refer to the supplementary material of~\cite{baptista2018bayesian}.}~\cite{baptista2018bayesian}.
All details regarding experiments, baselines and results are in the supplementary material.
The code is available at: \url{https://github.com/QUVA-Lab/COMBO}

\subsection{Bayesian optimization with binary variables \footnote{In~\cite{oh2019combo}, the workshop version of this paper, we found that the methods were compared on different sets of initial evaluations and different objectives coming from the random processes involved in the generation of objectives, which turned out to be disadvantageous to \combo. We reran these experiments making sure that all methods are evaluated on the same set of 25 pairs of an objective and a set of initial evaluations.}}

\begin{table}[!th]
\centering
\caption{Results on the binary benchmarks (Mean $\pm$ Std.Err. over $25$ runs)}
\vspace{-0.2cm}
\begin{sc}
\resizebox{0.9\columnwidth}{!}{%
\begin{tabular}{l|ccc|ccc}
\toprule
& \multicolumn{3}{|c|}{Contamination control} & \multicolumn{3}{|c}{Ising sparsification} \\
Method & $\lambda=0$ & $\lambda=10^{-4}$ & $\lambda=10^{-2}$ & $\lambda=0$ & $\lambda=10^{-4}$ & $\lambda=10^{-2}$ \\
\midrule
$\mathrm{SMAC}$        & 21.61$\pm$0.04          & 21.50$\pm$0.03          & 21.68$\pm$0.04          & 0.152$\pm$0.040          & 0.219$\pm$0.052          & 0.350$\pm$0.045          \\
$\mathrm{TPE}$         & 21.64$\pm$0.04          & 21.69$\pm$0.04          & 21.84$\pm$0.04          & 0.404$\pm$0.109          & 0.444$\pm$0.095          & 0.609$\pm$0.107          \\
$\mathrm{SA}$          & 21.47$\pm$0.04          & 21.49$\pm$0.04          & 21.61$\pm$0.04          & \textbf{0.095}$\pm$0.033 & 0.117$\pm$0.035          & 0.334$\pm$0.064          \\
$\mathrm{BOCS\mh SDP}$ & 21.37$\pm$0.03          & 21.38$\pm$0.03          & 21.52$\pm$0.03          & 0.105$\pm$0.031          & \textbf{0.059}$\pm$0.013 & \textbf{0.300}$\pm$0.039 \\
\midrule
$\mathrm{COMBO}$       & \textbf{21.28}$\pm$0.03 & \textbf{21.28}$\pm$0.03 & \textbf{21.44}$\pm$0.03  & 0.103$\pm$0.035          & 0.081$\pm$0.028          & 0.317$\pm$0.042          \\
\bottomrule
\end{tabular}
}
\end{sc}
\label{tab:binary}
\end{table}

\paragraph{Contamination control} The contamination control in food supply chain is a binary optimization problem with 21 binary variables ($\approx 2.10\times10^6$ configurations) \citep{hu2010contamination}, where one can intervene at each stage of the supply chain to quarantine uncontaminated food with a cost.
The goal is to minimize food contamination while minimizing the prevention cost.
We set the budget to $270$ evaluations including $20$ random initial points.
We report results in Table \ref{tab:binary} and figures in Supp.~\ref{supp:exp_contamination}.
\combo outperforms all competing methods. 
Although the optimizing variables are binary, there exist higher order interactions among the variables due to the sequential nature of the problem, showcasing the importance of the modelling flexibility of \combo.

\paragraph{Ising sparsification}
A probability mass function(p.m.f) $p(x)$ can be defined by an Ising model $I_p$.
In Ising sparsification, we approximate the p.m.f $p(z)$ of $I_p$ with a p.m.f $q(z)$ of $I_q$.
%
%
The objective is the KL-divergence between $p$ and $q$ with a $\lambda$-parameterized regularizer: $\mathcal{L}(x) = D_{KL}(p||q) + \lambda \|x\|_{1}$. 
We consider 24 binary variable Ising models on $4 \times 4$ spin grid ($\approx 1.68\times10^7$ configurations) with a budget of $170$ evaluations, including $20$ random initial points.
We report results in Table \ref{tab:binary} and figures in Supp.~\ref{supp:exp_ising}.
We observe that \combo is competitive, obtaining slightly worse results, probably because in Ising sparsification there exist no complex interactions between variables.


\subsection{Bayesian optimization with ordinal and multi-categorical variables}





\begin{wraptable}[8]{r}{0.45\columnwidth}
\vspace{-28pt}
\begin{center}
\caption[caption]{Non-binary benchmarks results \\ \centering{(Mean $\pm$ Std.Err. over $25$ runs).}}
\label{tab:nonbinary}
\begin{sc}
\vspace{0.1cm}
\resizebox{0.4\columnwidth}{!}{%
\begin{tabular}{lcc}
\toprule
Method & Branin & Pest Control\\
\midrule
$\mathrm{SMAC}$     & 0.6962$\pm$0.0705          & 14.2614$\pm$0.0753          \\
$\mathrm{TPE}$      & 0.7578$\pm$0.0844          & 14.9776$\pm$0.0446          \\
$\mathrm{SA}$       & 0.4659$\pm$0.0211          & 12.7154$\pm$0.0918          \\
\midrule
$\mathrm{COMBO}$    & \textbf{0.4113}$\pm$0.0012 & \textbf{12.0012}$\pm$0.0033 \\
\bottomrule
\end{tabular}
}
\end{sc}
\end{center}
\vspace{-3pt}
\hspace{7pt}
\begin{minipage}[adjusting]{0.4\textwidth}
\begin{tiny}
\begin{spacing}{0.0}
We exclude BOCS, as the open source implementation provided by the authors does not support ordinal/multi-categorical variables.
\end{spacing}
\end{tiny}
\end{minipage}
\end{wraptable}

\paragraph{Ordinal variables}
The Branin benchmark is an optimization problem of a non-linear function over a 2D search space \citep{jones1998efficient}.
We discretize the search space, namely, we consider a grid of points that leads to an optimization problem with ordinal variables.
We set the budget to $100$ evaluations
and report results in Table \ref{tab:nonbinary} and Figure~\ref{supp:fig_branin} in the Supp.
\combo converges to a better solution faster and with better stability.

\paragraph{Multi-categorical variables}
The Pest control is a modified version of the contamination control with more complex, higher-order variable interactions, as detailed in Supp.~\ref{supp:exp_pestcontrol}.
We consider 21 pest control stations, each having 5 choices ($\approx 4.77\times10^{14}$ combinatorial choices).
We set the budget to $320$ including $20$ random initial points.
Results are in Table \ref{tab:nonbinary} and Figure~\ref{supp:fig_pest} in the Supp.
\combo outperforms all methods and converges faster.

\subsection{Weighted maximum satisfiability}

The satisfiability (SAT) problem is an important combinatorial optimization problem, where one decides how to set variables of a Boolean formula to make the formula true.
Many other optimization problems can be reformulated as SAT/MaxSAT problems.
Although highly successful, specialized MaxSAT solvers~\citep{maxsat} exist, we use MaxSAT as a testbed for \BO evaluation.
We run tests on three benchmarks from the Maximum Satisfiability Competition 2018.\footnote{\href{url}{https://maxsat-evaluations.github.io/2018/benchmarks.html}}
The wMaxSAT weights are unit normalized.
All evaluations are negated to obtain a minimization problem. 
We set the budget to $270$ evaluations including 20 random initial points. 
We report results in Table~\ref{tbl:wmaxsat} and Figures in Supp.~\ref{supp:exp_wmaxsat}, and runtimes on wMaxSAT43 in the figure next to Table~\ref{tbl:wmaxsat}.on wMaxSAT28~(Figure~\ref{supp:fig_wmaxsat28_runtime} in the Supp.)\footnote{The all runtimes were measured on Intel(R) Xeon(R) CPU E5-2630 v3 @ 2.40GHz with python codes.}



\vspace{-4pt}
\begin{table}[h!]
    \caption{(\textit{left}) Negated wMaxSAT Minimum and (\textit{right)} Runtime VS. Minimum on wMaxSAT43.}
    \begin{minipage}{0.6\textwidth}
        \vspace{-4pt}
        \resizebox{0.9\textwidth}{!}{%
        \begin{tabular}{lccc}
        \toprule
        Method & wMaxSAT28 & wMaxSAT43 & wMaxSAT60\\
        \midrule
        $\mathrm{SMAC}$              & -20.05$\pm$0.67         & -57.42$\pm$1.76         & -148.60$\pm$1.01\\
        $\mathrm{TPE}$               & -25.20$\pm$0.88         & -52.39$\pm$1.99         & -137.21$\pm$2.83\\
        $\mathrm{SA}$                & -31.81$\pm$1.19         & -75.76$\pm$2.30         & -187.55$\pm$1.50\\
        $\mathrm{BOCS\textrm{-}SDP}$ & -29.49$\pm$0.53         & -51.13$\pm$1.69         & -153.67$\pm$2.01\\
        $\mathrm{BOCS\textrm{-}SA3}$ & -34.79$\pm$0.78         & -61.02$\pm$2.28$^a$     & N.A$^b$\\
        \midrule
        $\mathrm{COMBO}$             & \textbf{-37.80}$\pm$0.27& \textbf{-85.02}$\pm$2.14& \textbf{-195.65}$\pm$0.00\\
        \bottomrule
        \end{tabular}
        }\\
        \footnotesize{$^a$ 270 evaluations were not finished after 168 hours.\\}
        \footnotesize{$^b$ Not tried due to the computation time longer than wMaxSAT43.}
    \end{minipage}
    \hspace{2pt}
    \begin{minipage}{0.5\textwidth}
        \vspace{-4pt}
        \resizebox{0.8\textwidth}{!}{%
        \includegraphics[width=\columnwidth]{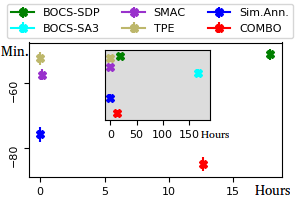}
        }
    \end{minipage}
    \label{tbl:wmaxsat}
    \vspace{-6pt}
\end{table}


\combo performs best in all cases. 
BOCS benefits from third-order interactions on wMaxSAT28 and wMaxSAT43.
However, this comes at the cost of large number of parameters~\citep{baptista2018bayesian}, incurring expensive computations.
When considering higher-order terms BOCS suffers severely from inefficient training.
This is due to a bad ratio between the number of parameters and number of training samples (\eg, for the 43 binary variables case BOCS-SA3/SA4/SA5 with, respectively, 3rd/4th/5th order interactions, has 13288/136698/1099296 parameters to train).
In contrast, \combo models arbitrarily high order interactions thanks to {\GP}'s nonparametric nature in a statistically efficient way.


Focusing on the largest problem, wMaxSAT60 with $\approx 1.15\times10^{18}$ configurations, \combo maintains superior performance.
We attribute this to the sparsity-inducing properties of the Horseshoe prior, after examining non sparsity-inducing priors (Supp.\ref{supp:exp_wmaxsat}).
The Horseshoe prior helps \combo attain further statistical efficiency.
We can interpret this reductionist behavior as the combinatorial version of methods exploiting low-effective dimensionality~\citep{bergstra2012random} on continuous search spaces~\citep{wang2016bayesian}.

The runtime --including evaluation time-- was measured on a dual 8-core 2.4 GHz (Intel Haswell E5-2630-v3) CPU with 64 GB memory using Python implementations.
SA, SMAC and TPE are faster but inaccurate compared to BOCS.
\combo is faster than BOCS-SA3, which needed 168 hours to collect around 200 evaluations.
\combo --modelling arbitrarily high-order interactions-- is also faster than BOCS-SDP constrained up-to second-order interactions only.

We conclude that in the realistic maximum satisfiablity problem \combo yields accurate solutions in reasonable runtimes, easily scaling up to high dimensional combinatorial search problems.

\vspace{-4pt}
\subsection{Neural architecture search}

\begin{wrapfigure}[14]{r}{0.45\columnwidth}
    \vspace{-8pt}
    \includegraphics[width=0.4\columnwidth]{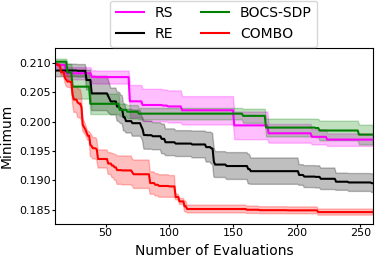}
    \vspace{-10pt}
    \caption[caption]{Result for Neural Architecture Search \centering(Mean $\pm$ Std.Err. over $4$ runs)}
    \label{fig:nasbinary}
\end{wrapfigure}

Last, we compare \BO methods on a neural architecture search (NAS) problem, a typical combinatorial optimization problem \citep{wistuba2019survey}.
We compare \combo with BOCS, as well as Regularized Evolution (RE)~\citep{real2018regularized}, one of the most successful evolutionary search algorithm for NAS~\citep{wistuba2019survey}.
We  include Random Search (RS) which can be competitive in well-designed search spaces~\citep{wistuba2019survey}.
We do not compare with the \BO-based NASBOT~\citep{kandasamy2018neural}.
NASBOT focuses exclusively on NAS problems and optimizes over a different search space than ours using an optimal transport-based metric between architectures, which is out of the scope for this work.

\begin{table}[!h]
    \caption{(\textit{left}) Connectivity (X -- no connection, O -- states are connected), (\textit{right)} Computation type.}
    \vspace{-4pt}
    \begin{minipage}{0.5\textwidth}
    \begin{center}
    \begin{scriptsize}
    \begin{sc}
    \begin{tabular}{c|ccccccc}
        & \textbf{IN} & \textbf{H1} & \textbf{H2} & \textbf{H3} & \textbf{H4} & \textbf{H5} & \textbf{OUT}\\
        \midrule
        \textbf{IN} & - & O & X & X & X & O & X \\
        \textbf{H1} & - & - & X & O & X & X & O \\
        \textbf{H2} & - & - & - & X & O & X & X \\
        \textbf{H3} & - & - & - & - & X & O & X \\
        \textbf{H4} & - & - & - & - & - & O & O \\
        \textbf{H5} & - & - & - & - & - & - & X \\
        \textbf{OUT}& - & - & - & - & - & - & - \\
    \end{tabular}  
    \end{sc}
    \end{scriptsize}
    \end{center}
    \end{minipage}
    \hspace{-2pt}
    \begin{minipage}{0.5\textwidth}
    \begin{footnotesize}
    \begin{sc}
    \begin{tabular}{c|c|c}
        \toprule
            & \textbf{MaxPool} & \textbf{Conv}\\
        \midrule
        \textbf{Small}  & Id $\equiv$ MaxPool(1$\times$1)& Conv(3$\times$3)\\
        \midrule
        \textbf{Large}  & MaxPool(3$\times$3)            & Conv(5$\times$5)\\
        \bottomrule
    \end{tabular}    
    \end{sc}
    \end{footnotesize}
    \end{minipage}
    \label{tbl:nas_search_space}
\end{table}

For the considered NAS problem we aim at finding the optimal cell comprising of one input node (\textbf{IN}), one output node (\textbf{OUT}) and five possible hidden nodes (\textbf{H1}--\textbf{H5}). 
We allow connections from \textbf{IN} to all other nodes, from \textbf{H1} to all other nodes and so one. 
We exclude connections that could cause loops.
An example of connections within a cell can be found in Table.~\ref{tbl:nas_search_space} on the left, where the input state \textbf{IN} connects to \textbf{H1}, \textbf{H1} connects to \textbf{H3} and \text{OUT}, and so on.
The input state and output state have identity computation types, whereas the computation types for the hidden states are determined by combination of 2 binary choices from the table on the right of Table.~\ref{tbl:nas_search_space}. 
In total, the search space consists of 31 binary variables, 21 for the connectivities and 2 for 5 computation types.
%

The objective is to minimize the classification error on validation set of CIFAR10~\citep{krizhevsky2009learning} with a penalty on the amount of FLOPs of a neural network constructed with a given cell.
We search for an architecture that balances accuracy and computational efficiency.
In each evaluation, we construct a cell, and stack three cells to build a final neural network.
More details are given in the Supp.~\ref{supp:exp_nas}.

In Figure~\ref{fig:nasbinary} we can notice that \combo outperforms other methods significantly.
BOCS-SDP and RS exhibit similar performance, confirming that for NAS modeling high-order interactions between variables is crucial.
Furthermore, \combo outperforms the specialized RE, one of the most successful evolutionary search (ES) algorithms shown to perform better on NAS than reinforcement learning (RL) algorithms~\citep{real2018regularized, wistuba2019survey}.
When increasing the number of evaluations to 500, RE still cannot reach the performance of \combo with 260 evaluations, see Figure~\ref{supp:fig_nas_500} in the Supp.
A possible explanation for such behavior is the high sensitivity to choices of hyperparameters of RE, and ES requires far more evaluations in general.
Details about RE hyperparameters can be found in the Supp.~\ref{supp:exp_nas}.

Due to the difficulty of using BO on combinatoral structures, BOs have not been widely used for NAS with few exceptions~\citep{kandasamy2018neural}.
\combo's performance suggests that a well-designed general combinatorial BO can be competitive or even better in NAS than ES and RL, especially when computational resources are constrained.
Since \combo is applicable to any set of combinatorial variables, its use in NAS is not restricted to the typical NASNet search space.
Interestingly, \combo can approximately optimize continuous variables by discretization, as shown in the ordinal variable experiment, thus, jointly optimizing the architecture and hyperparameter learning.


\section{Conclusion}
In this work, we propose COMBO, a Bayesian Optimization method for combinatorial search spaces.
To the best of our knowledge, COMBO is the first Bayesian Optimization algorithm using Gaussian Processes as a surrogate model suitable for problems with complex high order interactions between variables.
To efficiently tackle the exponentially increasing complexity of combinatorial search spaces, we rest upon the following ideas:
\emph{(i)} we represent the search space as the combinatorial graph, which combines sub-graphs given to all combinatorial variables using the graph Cartesian product. 
Moreover, the combinatorial graph reflects a natural metric on categorical choices (Hamming distance) when all combinatorial variables are categorical.
\emph{(ii)} we adopt the GFT to define the ``smoothness'' of functions on combinatorial structures.
\emph{(iii)} we propose a flexible ARD diffusion kernel for GPs on the combinatorial graph with a Horseshoe prior on scale parameters, which makes \combo scale up to high dimensional problems by performing variable selection.
All above features together make that \combo outperforms competitors consistently on a wide range of problems.
\combo is a statistically and computationally scalable Bayesian Optimization tool for combinatorial spaces, which is a field that has not been extensively explored.



\newpage

\setcounter{page}{1}
\setcounter{section}{0}


\vbox{%
    \hsize\textwidth
    \linewidth\hsize
    \vskip 0.1in
    \hrule height 4pt
    \vskip 0.25in
    \vskip -\parskip%
    \centering
    {\LARGE\bf Combinatorial Bayesian Optimization\\using the Graph Cartesian Product\\Supplementary Material\par}
    \vskip 0.29in
    \vskip -\parskip
    \hrule height 1pt
    \vskip 0.09in
    
  }

\section{Graph Cartesian product}
\label{supp:sec_cartesian}
\subsection{Graph Cartesian product and Hamming distance}

\begin{theorem}\label{supp:thm_shortest_path_hamming}
Assume a combinatorial graph $\mathcal{G}=(\mathcal{V}, \mathcal{E})$ constructed from categorical variables, $C_1, \cdots, C_N$, that is, $\G$ is a graph Cartesian product $\square_i\G(C_i)$ of complete sub-graphs $\{\G(C_i)\}_i$.
Then the shortest path $s(v_1, v_2;\mathcal{G})$ between vertices $v_1=(c_1^{(1)}, \cdots, c_N^{(1)}),  v_2=(c_1^{(2)}, \cdots, c_N^{(2)}) \in \V$ on $\mathcal{G}$ is equal to the Hamming distance between $(c_1^{(1)}, \cdots, c_N^{(1)})$ and $(c_1^{(2)}, \cdots, c_N^{(2)})$.
\end{theorem}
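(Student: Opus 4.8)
The plan is to prove the equality by establishing the two inequalities $s(v_1,v_2;\G)\le h$ and $s(v_1,v_2;\G)\ge h$ separately, where $h=|\{j : c_j^{(1)}\neq c_j^{(2)}\}|$ is the Hamming distance over the $N$ coordinates. The first thing I would isolate is a clean edge characterization: by the definition of the graph Cartesian product, an edge of $\G$ joins two vertices that agree in all but one coordinate $j$, with the two differing values adjacent in $\E_j$. Since each sub-graph $\G(C_i)$ is complete, any two distinct values of coordinate $j$ are adjacent, so the edges of $\G$ are \emph{exactly} the pairs of vertices differing in a single coordinate. This observation drives both directions.

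For the upper bound I would construct an explicit walk of length $h$. Let $j_1<\cdots<j_h$ enumerate the coordinates where $v_1$ and $v_2$ disagree, and define intermediate vertices $w_0=v_1, w_1,\ldots,w_h=v_2$, where $w_t$ agrees with $v_2$ on $j_1,\ldots,j_t$ and with $v_1$ on every other coordinate. Consecutive vertices $w_{t-1},w_t$ differ only in coordinate $j_t$, and by completeness of $\G(C_{j_t})$ the pair $(w_{t-1},w_t)$ is an edge of $\G$. This produces a path from $v_1$ to $v_2$ of length $h$, hence $s(v_1,v_2;\G)\le h$.

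For the lower bound I would use a monovariant argument. Fix any path $v_1=x_0,x_1,\ldots,x_\ell=v_2$ and set $\phi(x)=|\{j : x_j\neq c_j^{(2)}\}|$, the number of coordinates in which a vertex $x$ disagrees with the target $v_2$. Then $\phi(x_0)=h$ and $\phi(x_\ell)=0$. Because every edge of $\G$ alters exactly one coordinate, we have $|\phi(x_t)-\phi(x_{t-1})|\le 1$ at each step, so driving $\phi$ from $h$ down to $0$ requires at least $h$ steps, giving $\ell\ge h$. Minimizing over all paths yields $s(v_1,v_2;\G)\ge h$, and combining the two bounds gives $s(v_1,v_2;\G)=h$.

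The only place that genuinely invokes completeness is the upper bound, where each single-coordinate change must be realizable by one edge; I expect this step, together with establishing the ``each edge changes exactly one coordinate'' characterization, to be the crux of the argument, while the lower-bound monovariant is routine. (The stated Corollary then follows immediately: if some $\G(C_i)$ is not complete, the upper-bound construction may fail for that coordinate since $c_i^{(1)}$ and $c_i^{(2)}$ need not be adjacent, so more than one edge can be required to change coordinate $i$, while the lower-bound argument is untouched, forcing $s\ge h$.)
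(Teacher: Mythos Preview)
Your proof is correct and rests on the same key observation as the paper's: edges of the Cartesian product change exactly one coordinate, and completeness of each $\G(C_i)$ makes any single-coordinate change realizable by one edge. The paper's proof is a brief informal sketch that jumps directly from this observation to the formula $s(v_1,v_2;\G)=\sum_i \mathds{1}[c_i^{(1)}\neq c_i^{(2)}]$, whereas you make the argument rigorous by separating the upper bound (explicit path) from the lower bound (monovariant $\phi$); your version is strictly more careful but not a different idea.
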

\begin{proof}
From the graph Cartesian product definition we have that the shortest path between $v_1$ and $v_2$ consists of edges that change a value in one categorical variable at a time. As a result, an edge between $c_i^{(1)}$ and $c_i^{(2)}$, \ie, a difference in the i-$th$ categorical variable, and all other edges fixed contributes one error to the Hamming distance. Therefore, we can define the shortest path between $v_1$ and $v_2$ as the sum over all edges for which $c_i^{(1)}$ and $c_i^{(2)}$ are different, $s(v_1, v_2;\mathcal{G}) = \sum_i \mathds{1}[c_i^{(1)} \neq c_i^{(2)}]$ that is equivalent to the definition of the Hamming distance between two sets of categorical choices.
\end{proof}

\subsection{Graph Fourier transform with graph Cartesian product}
Graph Cartesian products can help us improve the scalability of the eigendecomposition~\citep{hammack2011handbook}. 
The Laplacian of the Cartesian product $\mathcal{G}_1 \square \mathcal{G}_2$ of two sub-graphs $\mathcal{G}_1$ and $\mathcal{G}_2$ can be algebraically expressed using the Kronecker product $\otimes$ and the Kronecker sum $\oplus$ \citep{hammack2011handbook}:
\vspace{-2pt}
\begin{equation}\label{supp_eq_laplacian_cartesian}
    L(\G_1 \square \G_2) = L(\G_1) \oplus L(\G_2) = L(\G_1) \otimes \mathbf{I}_{1} + \mathbf{I}_{2} \otimes L(\G_2) ,
\end{equation}
where $\mathbf{I}$ denotes the identity matrix.
As a consequence, considering the eigensystems $\{(\lambda_i^{(1)}, u_i^{(1)})\}$ and $\{(\lambda_j^{(2)}, u_j^{(2)})\}$ of $\mathcal{G}_1$ and $\mathcal{G}_2$, respectively, the eigensystem of $\mathcal{G}_1 \square \mathcal{G}_2$ is $\{(\lambda_i^{(1)} + \lambda_j^{(2)}, u_i^{(1)} \otimes u_j^{(2)})\}$.

\begin{proposition} \label{supp:prop_efficiency}
Assume a graph $\mathcal{G}=(\mathcal{V}, \mathcal{E})$ is the graph cartesian product of sub-graphs $\G=\square_i, \G_i$. Then graph Fourier Transform of $\G$ can be computed in $O(\sum_{i=1}^{m}|\mathcal{V}_i|^3)$ while the direct computation takes $O(\prod_{i=1}^{m}|\mathcal{V}_i|^3$).
\end{proposition}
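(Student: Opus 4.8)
The plan is to interpret ``computing the graph Fourier Transform'' as obtaining the full eigensystem $\{(\lambda_k, u_k)\}$ of the Laplacian $L(\mathcal{G})$, since this eigensystem supplies both the graph Fourier frequencies and the bases that enter the diffusion kernel. I would then compare two routes to this eigensystem and count the dominant arithmetic cost of each.

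First I would bound the naive route. The Laplacian $L(\mathcal{G})$ is an $n \times n$ symmetric matrix with $n = |\mathcal{V}| = \prod_{i=1}^m |\mathcal{V}_i|$, and a dense symmetric eigendecomposition costs $O(n^3)$. Substituting $n = \prod_i |\mathcal{V}_i|$ gives $O\big((\prod_i |\mathcal{V}_i|)^3\big) = O(\prod_i |\mathcal{V}_i|^3)$, which establishes the second bound.

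For the efficient route, I would invoke the Cartesian-product identity \eqref{supp_eq_laplacian_cartesian} together with the accompanying eigensystem decomposition: the pairs $(\lambda^{(1)}_i + \lambda^{(2)}_j,\, u^{(1)}_i \otimes u^{(2)}_j)$ exhaust the eigensystem of $\mathcal{G}_1 \square \mathcal{G}_2$, and by a straightforward induction on $m$ the eigensystem of $\square_i \mathcal{G}_i$ is determined entirely by the eigensystems of the factors. Hence it suffices to eigendecompose each sub-Laplacian $L(\mathcal{G}_i)$, a $|\mathcal{V}_i| \times |\mathcal{V}_i|$ matrix costing $O(|\mathcal{V}_i|^3)$; summing over the $m$ factors yields $O(\sum_i |\mathcal{V}_i|^3)$.

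The step I expect to carry the real weight is arguing that combining the factor eigensystems into the eigensystem of $\mathcal{G}$ does not re-introduce the exponential cost. Forming all $\prod_i |\mathcal{V}_i|$ eigenvalue sums, or materialising the Kronecker-product eigenvectors, would by itself exceed $O(\sum_i |\mathcal{V}_i|^3)$. The crux is therefore that the kernel computation in \eqref{eq:cartesian_diffusion_kernel} never requires the full dense eigenvector matrix $\mathbf{U}$: the diffusion kernel factorises as a Kronecker product of per-sub-graph kernels, so the eigensystem is kept and used in its factored form. I would make this explicit, noting that all downstream quantities (kernel entries and the matrix-vector products needed by the GP) can be evaluated from the factors, so the eigendecomposition of the sub-Laplacians is genuinely the bottleneck and the stated $O(\sum_i |\mathcal{V}_i|^3)$ bound holds.
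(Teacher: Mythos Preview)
Your proposal is correct and follows essentially the same argument as the paper: eigendecomposition of the full Laplacian costs $O\big(\prod_i |\mathcal{V}_i|^3\big)$, while eigendecomposing each factor separately costs $O\big(\sum_i |\mathcal{V}_i|^3\big)$ and the Cartesian-product identity \eqref{supp_eq_laplacian_cartesian} lets you recover the full eigensystem from the factors. Your treatment of the combination step is in fact more careful than the paper's own proof, which simply asserts that one can ``combine those to obtain eigendecomposition of the original full graph'' and only in a follow-up Remark notes that the overhead of summing eigenvalues and multiplying eigenvector entries is ``marginal'' relative to the direct cost---so your insistence that the eigensystem be kept and used in factored form (via \eqref{eq:cartesian_diffusion_kernel}) is exactly the right clarification.
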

\begin{proof}
Graph Fourier Transform is eigendecomposition of graph Laplacian $L(\G)$ where $\G=(\V, \E)$.
Eigendecomposition is of cubic complexity with respect to the number of rows(= the number of columns), which is the number of vertices $\vert \V \vert$ for graph Laplacian $L(\G)$.
If we directly compute eigendecomposition of $L(\G)$, it costs $O(\prod_i \vert \V \vert^3)$.
If we utilize graph Cartesian product, then we compute eigendecomposition for each sub-graphs and combine those to obtain eigendecomposition of the original full graph $\G$.
The cost for eigendecomposition of each subgraphs is $O(\vert \V_i \vert^3)$ and in total, it is summed to $O(\sum_i \vert \V \vert^3)$.
For graph Cartesian product, graph Fourier Transform can be computed in $O(\sum_i \vert \V \vert^3)$.
\end{proof}
\vspace{-12pt}
\paragraph{\textbf{Remark}} In the computation of gram matrices, eigenvalues from sub-graphs are summed and entries of eigenvectors are multiplied.
Compared to the cost of $O(\prod_i \vert \V \vert^3)$, this overhead is marginal.
Thus with graph Cartesian product, the ARD diffusion kernel can be computed efficiently with a pre-computed eigensystem for each sub-graphs.
This pre-computation is performed efficiently by using Prop.~\ref{supp:prop_efficiency}

\section{Surrogate model fitting}
\label{supp:sec_surrogate}
In the surrogate model fitting step of \combo, GP-parameters are sampled from the posterior using slice sampling~\citep{murray2010slice, neal2003slice} as in Spearmint~\citep{snoek2012practical, snoek2014input}.

\subsection{GP-parameter posterior sampling}

For a nonzero mean function, the marginal likelihood of $\D=(V,\mathbf{y})$ is
\begin{equation}
    -\frac{1}{2}(\mathbf{y}-m)^T (\sigma_f^2 K_{VV} + \sigma_n^2 I)^{-1} (\mathbf{y}-m) - \frac{1}{2}\log \det(\sigma_f^2 K_{VV} + \sigma_n^2 I) - \frac{n}{2} \log 2\pi 
\end{equation}
where $m$ is the value of constant mean function.
With \ARD diffusion kernel, the gram matrix is given by
\begin{equation}\label{eq:diffusion_kernel_supp}
    \sigma_f^2 K_{VV} + \sigma_n^2 I = \sigma_f^2 \bigotimes_i U_i \exp^{-\beta_i \Lambda_i} U_i^T + \sigma_n^2 I
\end{equation}
where $\Lambda_i$ is a diagonal matrix whose diagonal entries are eigenvalues of a sub-graph given to a combinatorial variable $L(\G(C_i))$, $U_i$ is a orthogonal matrix whose columns are corresponding eigenvalues of $L(\G(C_i))$, signal variance $\sigma_f^2$ and noise variance $\sigma_n^2$.

\paragraph{\textbf{Remark}} In the implementation of eq.~(\ref{eq:diffusion_kernel_supp}), a normalized version $\exp^{-\beta_i \Lambda_i} / \Psi_i$ where $\Psi_i = {1/\vert \mathcal{V}_i \vert \sum_{j=1, \cdots \vert \mathcal{V}_i \vert} \exp^{-\beta_i \lambda^{(i)}_j}}$ is used for numerical stability instead of $\exp^{-\beta_i \Lambda_i}$.

In the surrogate model fitting step of \combo, all GP-parameters are sampled from posterior which is proportional to the product of above marginal likelihood and priors on all GP-parameters such as $\beta_i$'s, signal variance $\sigma_f^2$, noise variance $\sigma_n^2$ and constant mean function value $m$.
In \combo all GP-parameters are sampled using slice sampling~\citep{neal2003slice}.

A single step of slice sampling in \combo consists of multiple univariate slice sampling steps:
\begin{itemize}[itemsep=-0.2ex, topsep=-3pt, partopsep=-2pt]
    \item [1.] $m$ : constant mean function value $m$
    \item [2.] $\sigma_f^2$ : signal variance
    \item [3.] $\sigma_n^2$ : noise variance
    \item [4.] $\{\beta_i\}_i$ with a randomly shuffled order
\end{itemize}
In \combo, slice sampling does warm-up with 100 burn-in steps and at every new evaluation, 10 more samples are generated to approximate the posterior.

\subsection{Priors}

Especially in BO where data is scarce, priors used in the posterior sampling play an extremely important role.
The Horseshoe priors are specified for $\beta_i$'s with the design goal of variable selection as stated in the main text.
Here, we provide details about other GP-parameters including constant mean function value $m$, signal variance $\sigma_f^2$ and noise variance $\sigma_n^2$.

\subsubsection{Prior on constant mean function value $m$}

Given $\D=(V,\mathbf{y})$ the prior over the mean function is the  following:
\begin{equation}
    p(m) \propto
    \begin{cases}
        \mathcal{N}(\mu, \sigma^2) \quad \text{if} \quad y_{min} \le m \le y_{max} \\
        0 \quad \text{otherwise}
    \end{cases}
\end{equation}
where $\mu = mean(\mathbf{y})$, $\sigma = (y_{max} - y_{min}) / 4$, $y_{min} = \min(\mathbf{y})$ and $y_{max} = \max(\mathbf{y})$.

This is the truncated Gaussian distribution between $y_{min}$ and $y_{max}$ with a mean at the sample mean of $\mathbf{y}$. 
The truncation bound is set so that untruncated version can sample in truncation bound with the probability of around 0.95.

\subsubsection{Prior on signal variance $\sigma_f^2$}

Given $\D=(V,\mathbf{y})$ the prior over the log-variance is the following:
\begin{equation}
    p(\log(\sigma_f^2)) \propto
    \begin{cases}
        \mathcal{N}(\mu, \sigma^2) \quad \text{if} \quad \frac{\sigma_{\mathbf{y}}^2}{K_{VVmax}} \le \sigma_f^2 \le \frac{\sigma_{\mathbf{y}}^2}{K_{VVmin}} \\
        0 \quad \text{otherwise}
    \end{cases}
\end{equation}
where $\sigma_{\mathbf{y}}^2 = variance(\mathbf{y})$, $\mu = \frac{1}{2}(\frac{\sigma_{\mathbf{y}}^2}{K_{VVmin}} + \frac{\sigma_{\mathbf{y}}^2}{K_{VVmax}})$, $\sigma = \frac{1}{4}(\frac{\sigma_{\mathbf{y}}^2}{K_{VVmin}} + \frac{\sigma_{\mathbf{y}}^2}{K_{VVmax}})$, $K_{VVmin} = \min(K_{VV})$, $K_{VVmax} = \max(K_{VV})$ and $K_{VV} = K(V,V)$.

This is the truncated Log-Normal distribution. The intuition behind this choice of prior is that in GP prior, $\sigma_f^2 K_{VV}$ is covariance matrix of $\mathbf{y}$ with the assumption of very small noise variance $\sigma_n^2$.
The truncation bound is set so that untruncated version can sample in truncation bound with the probability of around 0.95.
Since for larger $\sigma_f^2$, the the magnitude of the change of $\sigma_f^2$ has less significant effect than for smaller $\sigma_f^2$.
In order to take into account relative amount of change in $\sigma_f^2$, the Log-Normal distribution is used rather than the Normal distribution.

\subsubsection{Priors on scaling factor $\beta_i$ and noise variance $\sigma_n^2$}

We use the Horseshoe prior for $\beta_i$ and $\sigma_n^2$ in order to encourage sparsity. Since the probability density of the Horseshoe is intractable, its closed form bound is used as a proxy \citep{carvalho2010horseshoe}:
\begin{equation}
    \frac{K}{2}\log(1+\frac{4\tau^2}{x^2}) < p(x) < K\log(1+\frac{2\tau^2}{x^2})
\end{equation}
where $x=\beta_i$ or $x=\sigma_n^2$, $\tau$ is a global shrinkage parameter and $K = (2\pi^3)^{-1/2}$~\citep{carvalho2010horseshoe}.
Typically, the upper bound is used to approximate Horseshoe density.
For $\beta_i$, we use $\tau=5$ to avoid excessive sparsity.
For $\sigma_n^2$, we use $\tau=\sqrt{0.05}$ that prefers very small noise similarly to the Spearmint implementation.\footnote{\href{url}{https://github.com/JasperSnoek/spearmint}}

\subsection{Slice Sampling}

At every new evaluation in \combo, we draw samples of $\beta_{i}$. For each $\beta_{i}$ the sampling procedure is the following:
\begin{itemize}
    \item [SS-1] Set $t=0$ and choose a starting $\beta_{i}^{(t)}$ for which the probability is non-zero.
    \item [SS-2] Sample a value $q$ uniformly from $[0, p(\beta_{i}^{(t)}|\mathcal{D},\beta_{-i}^{(t)}, m^{(t)}, (\sigma_f^2)^{(t)}, (\sigma_n^2))^{(t)}]$.
    \item [SS-3] Draw a sample $\beta_{i}$ uniformly from regions, $p(\beta_{i}|\mathcal{D},\beta_{-i}^{(t)}, m^{(t)}, (\sigma_f^2)^{(t)}, (\sigma_n^2)^{(t)}) > q$.
    \item [SS-4] Set $\beta_{i}^{(t+1)} = \beta_{i}$ and repeat from SS-2 using $\beta_{i}^{(t+1)}$.
\end{itemize}
In SS-2, we step out using doubling and shrink to draw a new value. 
For detailed explanation about slice sampling, please refer to~\citep{neal2003slice}.
For other GP-parameters, the same univariate slice sampling is applied.

\section{Acquisition function maximization}
\label{supp:sec_acquisition}
In the acquisition function maximization step, we begin with candidate vertices chosen to balance between exploration and exploitation.
$20,000$ vertices are randomly selected for exploration.
To balance exploitation, we use 20 spray vertices similar to spray points\footnote{\url{https://github.com/JasperSnoek/spearmint/blob/b37a541be1ea035f82c7c82bbd93f5b4320e7d91/spearmint/spearmint/chooser/GPEIOptChooser.py\#L235}} in~\cite{snoek2012practical}.
Spray vertices are randomly chosen in the neighborhood of a vertex with the best evaluation (e.g, $nbd(v_{best},2) = \{v \vert d(v,v_{best}) \le 2\}$).
Out of $20,020$ initial vertices, 20 vertices with the highest acquisition values are used as initial points for further optimization.
This type of combination of heuristics for exploration and exploitation has shown improved performances~\cite{garnett2010bayesian, malkomes2016bayesian}.

We use a breadth-first local search (BFLS) to further optimize the acquisition function.
At a given vertex we compare acquisition values on adjacent vertices.
We then move to the vertex with the highest acquisition value and repeat until no acquisition value on an adjacent vertex is higher than acquisition value at the current vertex.

\subsection{Non-local search for acquisition function optimization}
\label{supp:subsec_nonlocal}
We tried simulated annealing as a non-local search in different ways, namely:
\begin{enumerate}
    \setlength\itemsep{1em}
    \item Randomly split 20 initial points into 2 sets of 10 points and optimize from 10 points in one set with BFLS and optimize from 10 points in another set with simulated annealing.
    \item For given 20 initial points, optimize from 20 points with BFLS and optimize from the same 20 points with simulated annealing.
    \item For given 20 initial points, firstly optimize from 20 points with BFLS and use 20 points optimized by BFLS as initial points for optimization using simulated annealing.
\end{enumerate}
The optimum of all 3 methods is hardly better than the optimum discovered solely by BFLS. Therefore, we decided to stick to the simpler procedure without SA.

\vspace{24pt}
\section{Experiments}
\vspace{12pt}
\label{supp:sec_experiments}
\subsection{Bayesian optimization with binary variables}
\vspace{4pt}

\subsubsection{Ising sparsification}
\label{supp:exp_ising}
\vspace{4pt}

Ising sparsification is about approximating a zero-field Ising model expressed by $p(z) = \frac{1}{Z_{p}} \exp\{z^{\top} J^{p} z\}$, where $z \in \{-1,1\}^{n}$, $J^{p} \in \mathbb{R}^{n\times n}$ is an interaction symmetric matrix, and $Z_{p} = \sum_{z} \exp\{z^{\top} J^{p} z\}$ is the partition function, using a model $q(z)$ with $J_{ij}^{q} = x_{ij} J_{ij}^{p}$ where $x_{ij} \in \{0,1\}$ are the decision variables. 
The objective function is the regularized Kullback-Leibler divergence between $p$ and $q$. 
\begin{equation}
    \mathcal{L}(x) = D_{KL}(p||q) + \lambda \|x\|_{1}
\end{equation}
where $\lambda > 0$ is the regularization coefficient
$D_{KL}$ could be calculated analytically \citep{baptista2018bayesian}. We follow the same setup as presented in \citep{baptista2018bayesian}, namely, we consider $4\times 4$ grid of spins, and interactions are sampled randomly from a uniform distribution over $[0.05, 5]$. The exhaustive search requires enumerating all $2^{24}$ configurations of $x$ that is infeasible. 
We consider $\lambda \in \{0, 10^{-4}, 10^{-2}\}$. 
We set the budget to $170$ evaluations.

\begin{figure}[ht!]
    \hspace{-0pt}
    \begin{minipage}{0.78\textwidth}
        \includegraphics[width=0.75\columnwidth]{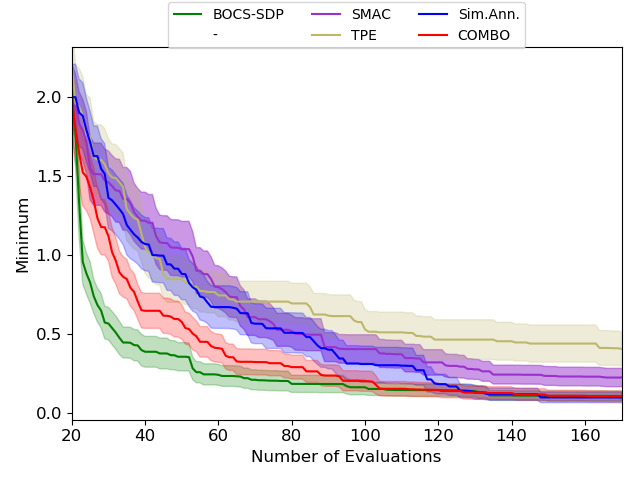}
    \end{minipage}
    \hspace{-70pt}
    \begin{minipage}{0.2\textwidth}
        \begin{tabular}{lc}
        \toprule
        Method & $\lambda=0.0$\\
        \midrule
        $\mathrm{SMAC}$     & 0.1516$\pm$0.0404\\
        $\mathrm{TPE}$      & 0.4039$\pm$0.1087\\
        $\mathrm{SA}$       & \textbf{0.0953}$\pm$0.0331\\
        $\mathrm{BOCS-SDP}$ & 0.1049$\pm$0.0308\\
        \midrule
        $\mathrm{COMBO}$    & 0.1030$\pm$0.0351\\
        \bottomrule
        \end{tabular}    
    \end{minipage}
    \vspace{-10pt}
    \caption{Ising sparsification ($\lambda=0.0$)}
    \label{supp:fig_ising+0}
\end{figure}

\begin{figure}[ht!]
    \hspace{-0pt}
    \begin{minipage}{0.78\textwidth}
        \includegraphics[width=0.75\columnwidth]{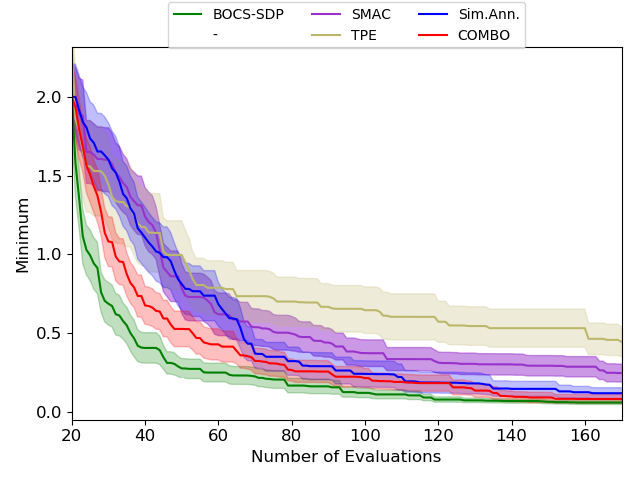}
    \end{minipage}
    \hspace{-70pt}
    \begin{minipage}{0.2\textwidth}
        \begin{tabular}{lc}
        \toprule
        Method & $\lambda=0.0001$\\
        \midrule
        $\mathrm{SMAC}$     & 0.2192$\pm$0.0522\\
        $\mathrm{TPE}$      & 0.4437$\pm$0.0952\\
        $\mathrm{SA}$       & 0.1166$\pm$0.0353\\
        $\mathrm{BOCS-SDP}$ & \textbf{0.0586}$\pm$0.0125\\
        \midrule
        $\mathrm{COMBO}$    & 0.0812$\pm$0.0279\\
        \bottomrule
        \end{tabular}    
    \end{minipage}
    \vspace{-10pt}
    \caption{Ising sparsification ($\lambda=0.0001$)}
    \label{supp:fig_ising-4}
\end{figure}

\begin{figure}[ht!]
    \hspace{-0pt}
    \begin{minipage}{0.78\textwidth}
        \includegraphics[width=0.75\columnwidth]{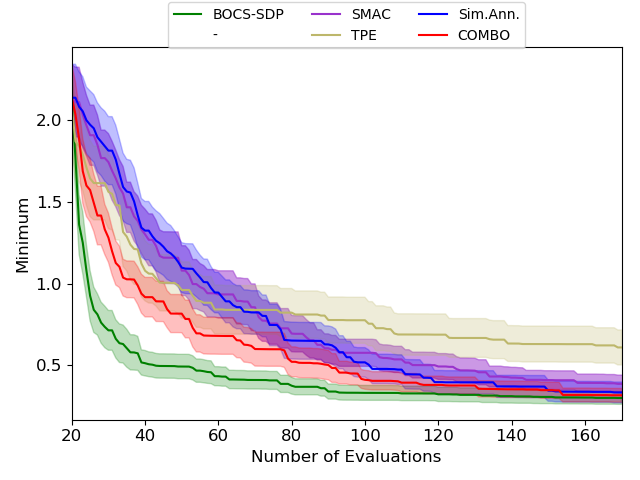}
    \end{minipage}
    \hspace{-70pt}
    \begin{minipage}{0.2\textwidth}
        \begin{tabular}{lc}
        \toprule
        Method & $\lambda=0.01$\\
        \midrule
        $\mathrm{SMAC}$     & 0.3501$\pm$0.0447\\
        $\mathrm{TPE}$      & 0.6091$\pm$0.1071\\
        $\mathrm{SA}$       & 0.3342$\pm$0.0636\\
        $\mathrm{BOCS-SDP}$ & \textbf{0.3001}$\pm$0.0389\\
        \midrule
        $\mathrm{COMBO}$    & 0.3166$\pm$0.0420\\
        \bottomrule
        \end{tabular}    
    \end{minipage}
    \vspace{-10pt}
    \caption{Ising sparsification ($\lambda=0.01$)}
    \label{supp:fig_ising-2}
\end{figure}

\subsubsection{Contamination control}
\label{supp:exp_contamination}

The contamination control in food supply chain is a binary optimization problem \citep{hu2010contamination}.
The problem is about minimizing the contamination of food where at each stage a prevention effort can be made to decrease a possible contamination.
Applying the prevention effort results in an additional cost $c_i$.
However, if the food chain is contaminated at stage $i$, the contamination spreads at rate $\alpha_i$.
The contamination at the $i$-th stage is represented by a random variable $\Gamma_i$.
A random variable $z_i$ denotes a fraction of contaminated food at the $i$-th stage, and it could be expressed in an recursive manner, namely, $z_{i} = \alpha_i (1 - x_{i})(1 - z_{i-1}) + (1 - \Gamma_i x_i)z_{i-1}$, where $x_i \in \{0, 1\}$ is the decision variable representing the preventing effort at stage $i$.
Hence, the optimization problem is to make a decision at each stage whether the prevention effort should be applied so that to minimize the general cost while also ensuring that the upper limit of contamination is $u_i$ with probability at least $1-\varepsilon$.
The initial contamination and other random variables follow beta distributions that results in the following objective function
\begin{equation}
    \mathcal{L}(x) = \sum_{i=1}^{d}\Big{[} c_{i} x_{i} + \frac{\rho}{T} \sum_{k=1}^{T} 1_{\{z_{k}>u_i\}} \Big{]} + \lambda \| x \|_{1}   
\end{equation}
where $\lambda$ is a regularization coefficient, $\rho$ is a penalty coefficient (we use $\rho = 1$) and we set $T=100$. Following \citep{baptista2018bayesian}, we assume $u_i = 0.1$, $\varepsilon=0.05$, and $\lambda \in \{0, 10^{-4}, 10^{-2}\}$. We set the budget to $270$ evaluations.

\begin{figure}[ht!]
    \hspace{-0pt}
    \begin{minipage}{0.78\textwidth}
        \includegraphics[width=0.75\columnwidth]{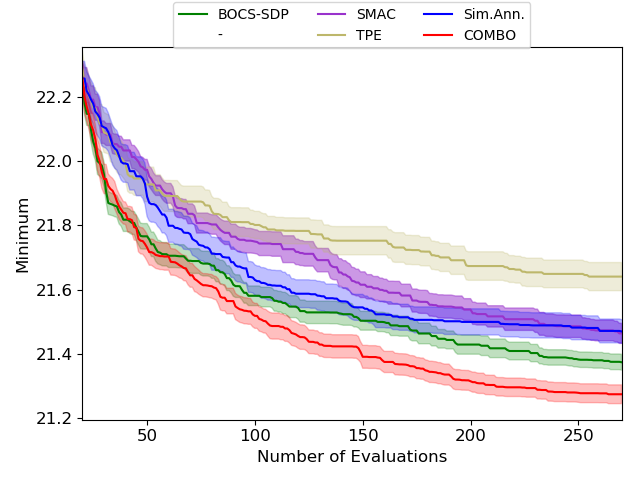}
    \end{minipage}
    \hspace{-70pt}
    \begin{minipage}{0.2\textwidth}
        \begin{tabular}{lc}
        \toprule
        Method & $\lambda=0.0$\\
        \midrule
        $\mathrm{SMAC}$     & 21.4644$\pm$0.0312\\
        $\mathrm{TPE}$      & 21.6408$\pm$0.0437\\
        $\mathrm{SA}$       & 21.4704$\pm$0.0366\\
        $\mathrm{BOCS-SDP}$ & 21.3748$\pm$0.0246\\
        \midrule
        $\mathrm{COMBO}$    & \textbf{21.2752}$\pm$0.0292\\
        \bottomrule
        \end{tabular}    
    \end{minipage}
    \vspace{-10pt}
    \caption{Contamination control ($\lambda=0.0$)}
    \label{supp:fig_cont+0}
\end{figure}

\begin{figure}[ht!]
    \hspace{-0pt}
    \begin{minipage}{0.78\textwidth}
        \includegraphics[width=0.75\columnwidth]{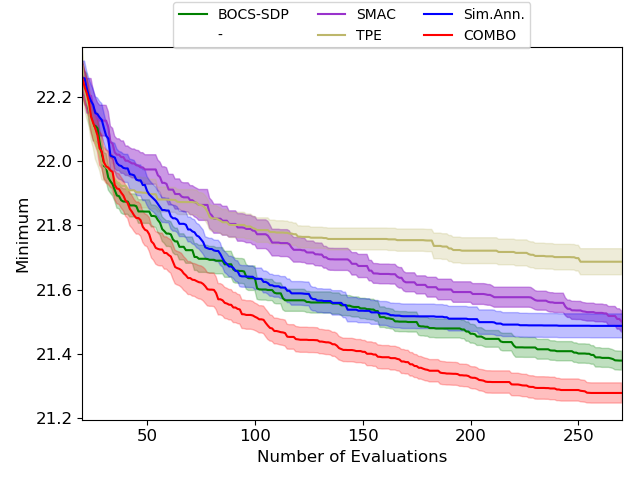}
    \end{minipage}
    \hspace{-70pt}
    \begin{minipage}{0.2\textwidth}
        \begin{tabular}{lc}
        \toprule
        Method & $\lambda=0.0001$\\
        \midrule
        $\mathrm{SMAC}$     & 21.5011$\pm$0.0329\\
        $\mathrm{TPE}$      & 21.6868$\pm$0.0406\\
        $\mathrm{SA}$       & 21.4871$\pm$0.0372\\
        $\mathrm{BOCS-SDP}$ & 21.3792$\pm$0.0296\\
        \midrule
        $\mathrm{COMBO}$    & \textbf{21.2784}$\pm$0.0314\\
        \bottomrule
        \end{tabular}    
    \end{minipage}
    \vspace{-10pt}
    \caption{Contamination control ($\lambda=0.0001$)}
    \label{supp:fig_cont-4}
\end{figure}

\begin{figure}[ht!]
    \hspace{-0pt}
    \begin{minipage}{0.78\textwidth}
        \includegraphics[width=0.75\columnwidth]{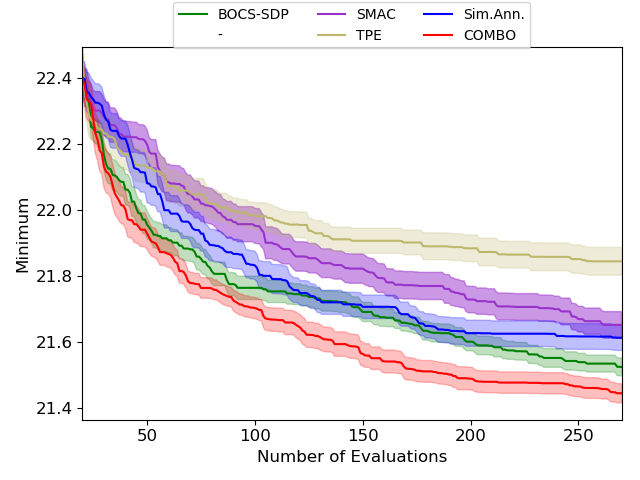}
    \end{minipage}
    \hspace{-70pt}
    \begin{minipage}{0.2\textwidth}
        \begin{tabular}{lc}
        \toprule
        Method & $\lambda=0.01$\\
        \midrule
        $\mathrm{SMAC}$     & 21.6512$\pm$0.0403\\
        $\mathrm{TPE}$      & 21.8440$\pm$0.0422\\
        $\mathrm{SA}$       & 21.6120$\pm$0.0385\\
        $\mathrm{BOCS-SDP}$ & 21.5232$\pm$0.0269\\
        \midrule
        $\mathrm{COMBO}$    & \textbf{21.4436}$\pm$0.0293\\
        \bottomrule
        \end{tabular}    
    \end{minipage}
    \vspace{-10pt}
    \caption{Contamination control ($\lambda=0.01$)}
    \label{supp:fig_cont-2}
\end{figure}

\newpage
\subsection{Bayesian optimization with ordinal and multi-categorical variables}

\subsubsection{Oridinal variables : discretized branin}
\label{supp:exp_branin}
In order to test \combo on ordinal variables. We adopt widely used continuous benchmark branin function.
Branin is defined on $[0,1]^2$, we discretize each dimension with 51 equally space points so that center point can be chosen in the discretized space.
Therefore, the search space is comprised of 2 ordinal variables with 51 values(choices) for each.

\combo outperforms all of its competitors. 
In Figure~\ref{supp:fig_branin}, SMAC and TPE exhibit similar search progress as \combo, but in term of the final value at 100 evaluations, those two are overtaken by SA.
\combo maintains its better performance over all range of evaluations up to 100.

\begin{figure}[ht!]
    \hspace{-0pt}
    \begin{minipage}{0.78\textwidth}
        \includegraphics[width=0.75\columnwidth]{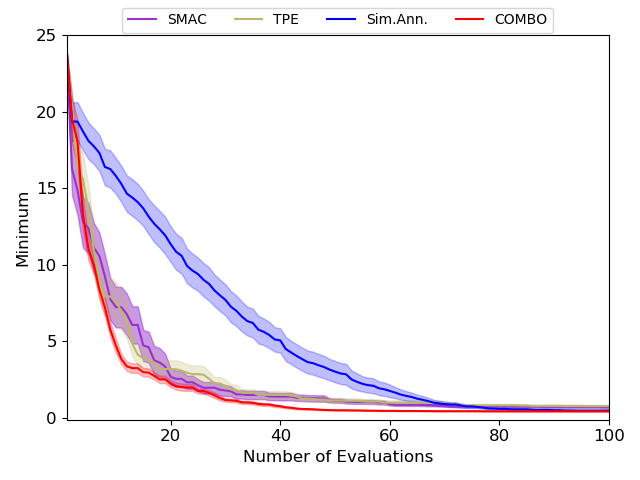}
    \end{minipage}
    \hspace{-70pt}
    \begin{minipage}{0.2\textwidth}
        \begin{tabular}{lc}
        \toprule
        Method & Branin\\
        \midrule
        $\mathrm{SMAC}$  & 0.6962$\pm$0.0705\\
        $\mathrm{TPE}$   & 0.7578$\pm$0.0844\\
        $\mathrm{SA}$    & 0.4659$\pm$0.0211\\
        \midrule
        $\mathrm{COMBO}$ & \textbf{0.4113}$\pm$0.0012\\
        \bottomrule
        \end{tabular}    
    \end{minipage}
    \vspace{-10pt}
    \caption{Ordinal variables : discretized branin}
    \label{supp:fig_branin}
\end{figure}

\subsubsection{Multi-categorical variables : pest control}
\label{supp:exp_pestcontrol}
\vspace{-4pt}

\begin{figure}[ht!]
    \hspace{-0pt}
    \begin{minipage}{0.78\textwidth}
        \includegraphics[width=0.75\columnwidth]{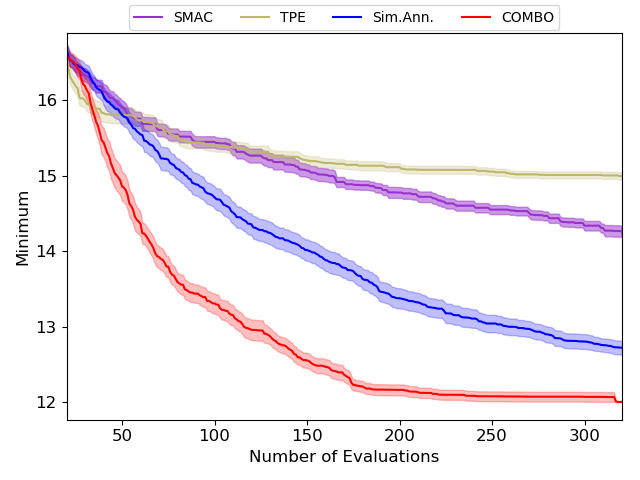}
    \end{minipage}
    \hspace{-70pt}
    \begin{minipage}{0.2\textwidth}
        \begin{tabular}{lc}
        \toprule
        Method & Pest\\
        \midrule
        $\mathrm{SMAC}$     & 14.2614$\pm$0.0753\\
        $\mathrm{TPE}$      & 14.9776$\pm$0.0446\\
        $\mathrm{SA}$       & 12.7154$\pm$0.0918\\
        \midrule
        $\mathrm{COMBO}$    & \textbf{12.0012}$\pm$0.0033\\
        \bottomrule
        \end{tabular}    
    \end{minipage}
    \vspace{-10pt}
    \caption{Multi-categorical variables : pest control}
    \label{supp:fig_pest}
\end{figure}

In the chain of stations, pest is spread in one direction, at each pest control station, the pest control officer can choose to use a pesticide from 4 different companies which differ in their price and effectiveness.

For $N$ pest control stations, the search space for this problem is $5^N$, 4 choices of a pesticide and the choice of not using any of it.

The price and effectiveness reflect following dynamics.
\begin{itemize}
    \setlength\itemsep{0.1em}
    \item If you have purchased a pesticide a lot, then in your next purchase of the same pesticide, you will get discounted proportional to the amount you have purchased.
    \item If you have used a pesticide a lot, then pests will acquire strong tolerance to that specific product, which decrease effectiveness of that pesticide.
\end{itemize}

Formally, there are four variables: 
at $i$-th pest control
$Z_i$ is the portion of the product having pest,
$A_i$ is the action taken,
$C_i^{(l)}$ is the adjusted cost of pesticide of type $l$,
$T_i^{(l)}$ is the beta parameter of the Beta distribution for the effectiveness of pesticide of type $l$.
It starts with initial $Z_0$ and follows the same evolution as in the contamination control, but after each choice of pesticide type whenever the taken action is to use one out of 4 pesticides or no action.
$\{C_i^{(l)}\}_{1, \cdots, 4}$ are adjusted in the manner that the pesticide which has been purchased most often will get a discount for the price.
$\{T_i^{(l)}\}_{1, \cdots, 4}$ are adjusted in the fashion that the pesticide which has been frequently used in previous control point cannot be as effective as before since the insects have developed tolerance to that.

The portion of the product having pest follows the dynamics below
\begin{equation}
    z_{i} = \alpha_i (1 - x_{i})(1 - z_{i-1}) + (1 - \Gamma_i x_i)z_{i-1}
\end{equation}
when the pesticide is used, the effectiveness $x_{i}$ of pesticide follows beta distribution with the parameters, which has been adjusted according to the sequence of actions taken in previous control points.

Under this setting, our goal is to minimize the expense for pesticide control and the portion of products having pest while going through the chain of pest control stations.
The objective is similar to the contamination control problem
\begin{equation}
    \mathcal{L}(x) = \sum_{i=1}^{d}\Big{[} c_{i} x_{i} + \frac{\rho}{T} \sum_{k=1}^{T} 1_{\{z_{k}>u_i\}} \Big{]}
\end{equation}

However, we want to stress out that the dynamics of this problem is far more complex than the one in the contamination control case. 
First, it has $25$ variables and each variable has $5$ categories.
More importantly, the price and effectiveness of pesticides are dynamically adjusted depending on the previously made choice.

\subsection{Weighted maximum satisfiability(wMaxSAT)}
\label{supp:exp_wmaxsat}
\vspace{-4pt}
Satisfiability problem is the one of the most important and general form of combinatorial optimization problems. 
SAT solver competition is held in Satisfiability conference every year.\footnote{\href{url}{http://satisfiability.org/}, \href{url}{http://sat2018.azurewebsites.net/competitions/}}
Due to the resemblance between combinatorial optimizations and weighted Maximum satisfiability problems, in which the goal is to find boolean values that maximize the combined weights of satisfied clauses, we optimize certain benchmarks taken from Maximum atisfiability(MaxSAT) Competition 2018.
We took randomly 3 benchmarks of weighted maximum satisfiability problems with no hard clause with the number of variables not exceeding 100.\footnote{\href{url}{https://maxsat-evaluations.github.io/2018/benchmarks.html} maxcut-johnson8-2-4.clq.wcnf (28 variables), maxcut-hamming8-2.clq.wcnf (43 variables), frb-frb10-6-4.wcnf (60 variables)}
The weights are normalized by mean subtraction and standard deviation division and evaluations are negated to be minimization problems.

\begin{figure}[!h]
    \hspace{-0pt}
    \begin{minipage}{0.78\textwidth}
        \includegraphics[width=0.75\columnwidth]{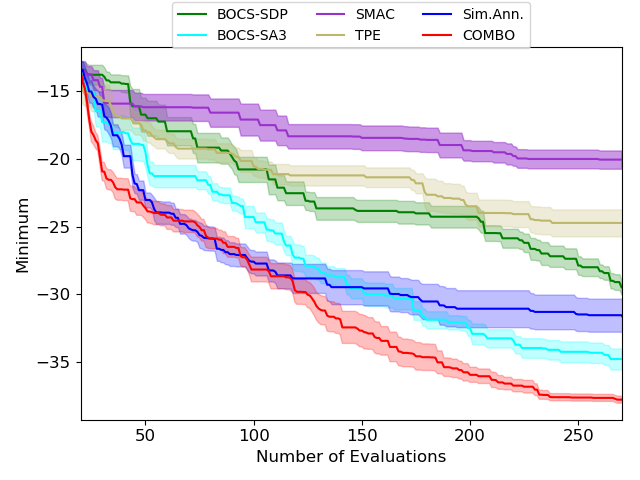}
    \end{minipage}
    \hspace{-70pt}
    \begin{minipage}{0.2\textwidth}
        \begin{tabular}{lc}
        \toprule
        Method & 28\\
        \midrule
        $\mathrm{SMAC}$            & -20.0530$\pm$0.6735\\
        $\mathrm{TPE}$             & -25.2010$\pm$0.8750\\
        $\mathrm{SA}$              & -31.8060$\pm$1.1929\\
        $\mathrm{BOCS\text{-}SDP}$ & -29.4865$\pm$0.5348\\
        $\mathrm{BOCS\text{-}SA3}$ & -34.7915$\pm$0.7814\\
        \midrule
        $\mathrm{COMBO}$           & \textbf{-37.7960}$\pm$0.2662\\
        \bottomrule
        \end{tabular}    
    \end{minipage}
    \vspace{-10pt}
    \caption{wMaxSAT28}
    \label{supp:fig_wmaxsat28}
    \vspace{-8pt}
\end{figure}

\begin{figure}[!ht]
    \hspace{-0pt}
    \begin{minipage}{0.78\textwidth}
        \includegraphics[width=0.75\columnwidth]{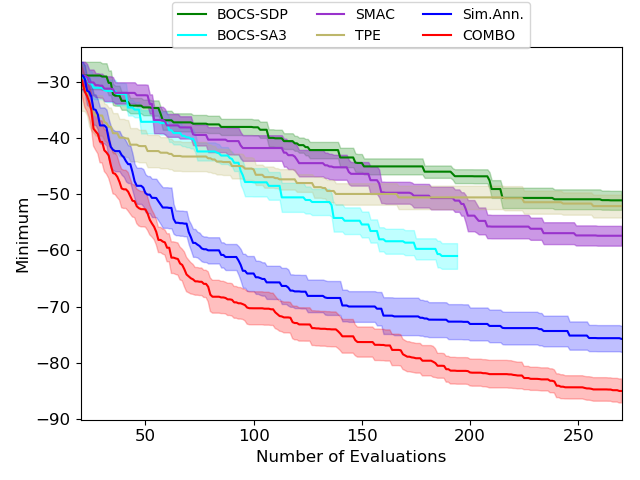}
    \end{minipage}
    \hspace{-70pt}
    \begin{minipage}{0.2\textwidth}
        \begin{tabular}{lc}
        \toprule
        Method & 43\\
        \midrule
        $\mathrm{SMAC}$              & -57.4217$\pm$1.7614\\
        $\mathrm{TPE}$               & -52.3856$\pm$1.9898\\
        $\mathrm{SA}$                & -75.7582$\pm$2.3048\\
        $\mathrm{BOCS\text{-}SDP}$   & -51.1265$\pm$1.6903\\
        $\mathrm{BOCS\text{-}SA3}^*$ & -61.0186$\pm$2.2812\\
        \midrule
        $\mathrm{COMBO}$             & \textbf{-85.0155}$\pm$2.1390\\
        \bottomrule
        \end{tabular}    
    \end{minipage}
    \vspace{-10pt}
    \caption[caption]{wMaxSAT43 \\ \footnotesize{$^*$BOCS-SA3 was run for 168 hours but could not finish 270 evaulations.}}
    \label{supp:fig_wmaxsat43}
\end{figure}

\begin{figure}[!ht]
    \hspace{-0pt}
    \begin{minipage}{0.78\textwidth}
        \includegraphics[width=0.75\columnwidth]{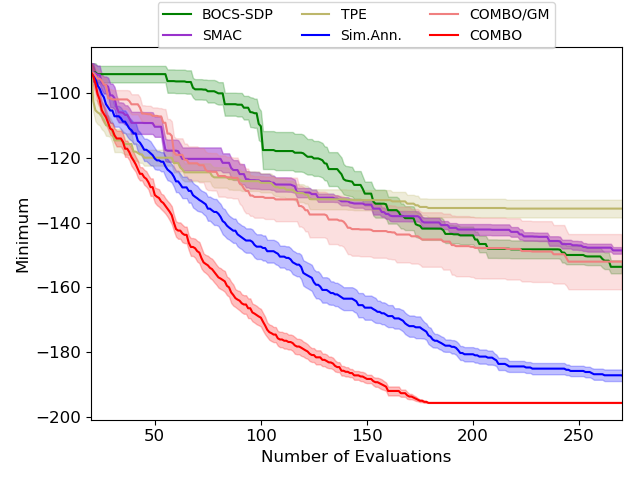}
    \end{minipage}
    \hspace{-70pt}
    \begin{minipage}{0.2\textwidth}
        \begin{tabular}{lc}
        \toprule
        Method & 60\\
        \midrule
        $\mathrm{SMAC}$            & -148.6020$\pm$1.0135\\
        $\mathrm{TPE}$             & -137.2104$\pm$2.8296\\
        $\mathrm{SA}$              & -187.5506$\pm$1.4962\\
        $\mathrm{BOCS\text{-}SDP}$ & -153.6722$\pm$2.0096\\
        $\mathrm{COMBO/GM}$        & -152.0745$\pm$8.5167\\
        \midrule
        $\mathrm{COMBO}$           & \textbf{-195.6527}$\pm$0.0000\\
        \bottomrule
        \end{tabular}    
    \end{minipage}
    \vspace{-10pt}
    \caption{wMaxSAT60}
    \label{supp:fig_wmaxsat60}
\end{figure}

\begin{figure}[!ht]
    \centering
    \includegraphics[width=0.75\columnwidth]{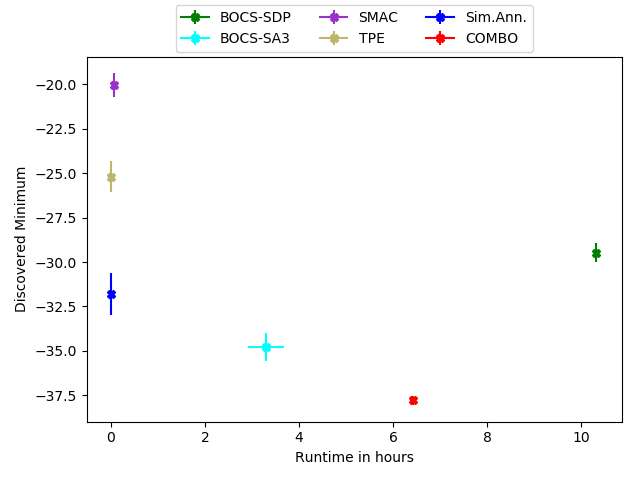}
    \caption{Runtime VS. Minimum on wMaxSAT28}
    \label{supp:fig_wmaxsat28_runtime}
\end{figure}

\begin{figure}[!ht]
    \centering
    \includegraphics[width=0.75\columnwidth]{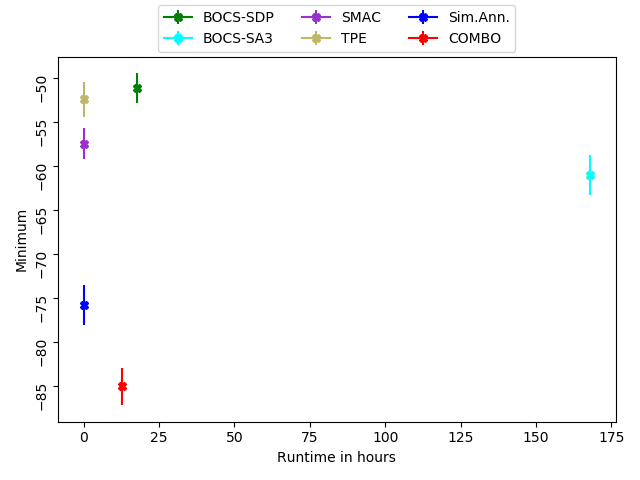}
    \caption{Runtime VS. Minimum on wMaxSAT4. BOCS-SA3 did not finish all 270 evaluations after 168 hours, we plot the runtime for BOCS-SA3 as 168 hours.}
    \label{supp:fig_wmaxsat43_runtime}
\end{figure}

\newpage
\subsection{Neural architecture search(NAS)}
\label{supp:exp_nas}

\subsubsection{Search space}
\label{supp:subsec_nas_search_space}

\begin{table}[!h]
    \caption{(\textit{left}) Connectivity and (\textit{right)} Computation type.}
    \begin{minipage}{0.5\textwidth}
    \begin{center}
    \begin{scriptsize}
    \begin{sc}
    \begin{tabular}{c|ccccccc}
        & \textbf{IN} & \textbf{H1} & \textbf{H2} & \textbf{H3} & \textbf{H4} & \textbf{H5} & \textbf{OUT}\\
        \midrule
        \textbf{IN} & - & O & X & X & X & O & X \\
        \textbf{H1} & - & - & X & O & X & X & O \\
        \textbf{H2} & - & - & - & X & O & X & X \\
        \textbf{H3} & - & - & - & - & X & O & X \\
        \textbf{H4} & - & - & - & - & - & O & O \\
        \textbf{H5} & - & - & - & - & - & - & X \\
        \textbf{OUT}& - & - & - & - & - & - & - \\
    \end{tabular}  
    \end{sc}
    \end{scriptsize}
    \end{center}
    \end{minipage}
    \hspace{2pt}
    \begin{minipage}{0.5\textwidth}
    \begin{scriptsize}
    \begin{sc}
    \begin{tabular}{c|c|c}
        \toprule
            & \textbf{MaxPool} & \textbf{Conv}\\
        \midrule
        \textbf{Small}  & Id $\equiv$ MaxPool(1$\times$1)& Conv(3$\times$3)\\
        \midrule
        \textbf{Large}  & MaxPool(3$\times$3)            & Conv(5$\times$5)\\
        \bottomrule
    \end{tabular}    
    \end{sc}
    \end{scriptsize}
    \end{minipage}
    \label{supp:tbl_nas_search_space}
\end{table}

In our architecture search problem, the cell consists of one input state(\textbf{IN}), one output state(\textbf{OUT}) and five hidden states(\textbf{H1}$\sim$\textbf{H5}).
The connectivity between 7 states are specified as in the left of Table.~\ref{supp:tbl_nas_search_space} where it can be read that (\textbf{IN}$\rightarrow$\textbf{H1}) and (\textbf{IN}$\rightarrow$\textbf{H5}) from the first row.
Input and output states are identity maps. 
The computation type of each of 5 hidden states are determined by combination of 2 binary choices as in the right of Table.~\ref{supp:tbl_nas_search_space}.

In total, our search space consists of 31 binary variables.\footnote{We design a binary search space for NAS so that to also compare with  BOCS. \combo is not restricted to binary choices for NAS, however.}

\subsubsection{Evaluation}

For a given 31 binary choices, we construct a cell and stack 3 cells as follows
\begin{center}
    Input\\
    $\downarrow$\\
    Conv($3 \times 16 \times 3 \times 3$)-BN-ReLU\\
    $\downarrow$\\
    Cell with 16 channels\\
    $\downarrow$\\
    MaxPool($2\times2$)-Conv($16 \times 32 \times 1 \times 1$)\\
    $\downarrow$\\
    Cell with 32 channels\\
    $\downarrow$\\
    MaxPool($2\times2$)-Conv($32 \times 64 \times 1 \times 1$)\\
    $\downarrow$\\
    Cell with 64 channels\\
    $\downarrow$\\
    MaxPool($2\times2$)-FC($1024 \times 10$)\\
    $\downarrow$\\
    Output
\end{center}
At each MaxPool, the height and the width of features are halved.

The network is trained for 20 epochs with Adam~\citep{kingma2014adam} with the default settings in pytorch~\citep{paszke2017automatic} except for the weight decay of $5\times10^{-5}$.
CIFAR10~\citep{krizhevsky2009learning} training data is randomly shuffled with random seed 0 in the command ``numpy.RandomState(0).shuffle(indices)''.
In the shuffled training data, the first 30000 is used for training and the last 10000 is used for evaluations.
Batch size is 100.
Early stopping is applied when validation accuracy is worse than the validation accuracy 10 epochs ago.

Due to the small number of epochs, evaluations are a bit noisy. 
In order to stabilize evaluations, we run 4 times of training for a given architecture.
On GTX 1080 Ti with 11GB, 4 runs can be run in parallel.
Depending on a given architecture training took approximately 5$\sim$30 minutes.

Since the some binary choices result in invalid architectures, in such case, validation accuracy is given as 10\%, which is the expected accuracy of constant prediction.

The final evaluation is given as
\begin{equation}
    Error_{validation} + 0.02\cdot\frac{\text{FLOPs of a given architecture}}{\text{Maximim FLOPs in the search space}}
\end{equation}
where ``Maximim FLOPs in the search space'' is computed from the cell with all connectivity among states and Conv($5\times5$) for all \textbf{H1}$\sim$\textbf{H5}.
$0.02$ is set with the assumption that we can afford $1\%$ of increased error with $50\%$ reduction in FLOPs.

\begin{figure}[h!]
    \hspace{-0pt}
    \begin{minipage}{0.78\textwidth}
        \includegraphics[width=0.75\columnwidth]{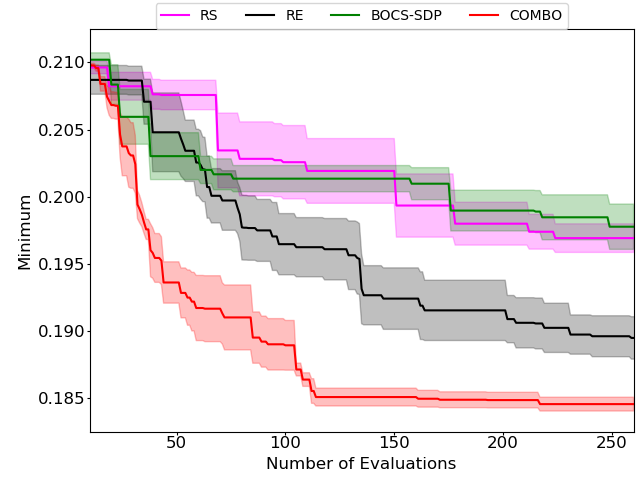}
    \end{minipage}
    \hspace{-70pt}
    \begin{minipage}{0.2\textwidth}
        \begin{tabular}{lc}
        \toprule
        Method & NAS\\
        \midrule
        $\mathrm{RS}$       & 0.1969$\pm$0.0011\\
        $\mathrm{BOCS-SDP}$ & 0.1978$\pm$0.0017\\
        $\mathrm{RE}$       & 0.1895$\pm$0.0016\\
        \midrule
        $\mathrm{COMBO}$    & \textbf{0.1846}$\pm$0.0005\\
        \bottomrule
        \end{tabular}    
    \end{minipage}
    \vspace{-6pt}
    \caption{Neural architecture search experiment.}
    \label{supp:fig_nas}
\end{figure}

\begin{figure}[h!]
    \hspace{-0pt}
    \begin{minipage}{0.78\textwidth}
        \includegraphics[width=0.75\columnwidth]{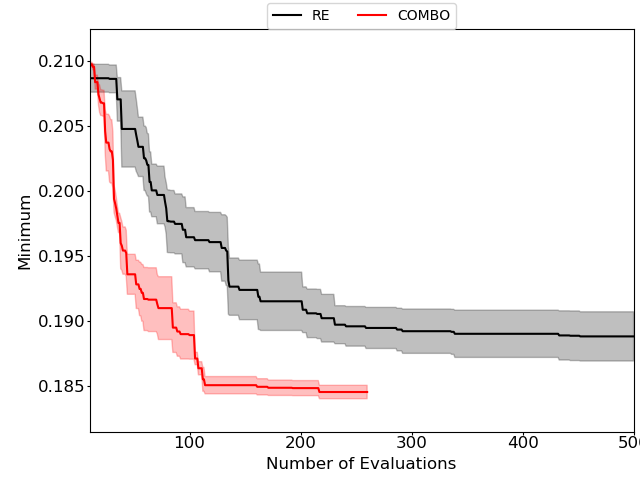}
    \end{minipage}
    \hspace{-70pt}
    \begin{minipage}{0.2\textwidth}
        \begin{tabular}{lc}
        \toprule
        Method(\#eval) & NAS\\
        \midrule
        $\mathrm{RE}(260)$       & 0.1895$\pm$0.0016\\
        $\mathrm{RE}(500)$       & 0.1888$\pm$0.0019\\
        \midrule
        $\mathrm{COMBO}(260)$    & \textbf{0.1846}$\pm$0.0005\\
        \bottomrule
        \end{tabular}    
    \end{minipage}
    \vspace{-6pt}
    \caption{Neural architecture search experiment with additional evaluations for RE (up to 500 evaluations).}
    \label{supp:fig_nas_500}
\end{figure}

\subsubsection{Comparison to NASNet search space}

\begin{center}
    \begin{tabular}{c|c|c}
        \toprule
        Binary &  & NASNet\\
        \midrule
        Yes & Invalid Architecture & No\\
        Not fixed & The number of inputs to each state & 2\\
        4 & The number of computation type of states & 13\\
        \bottomrule
    \end{tabular}  
\end{center}

\subsubsection{Regularized evolution hyperparameters}
In evolutionary search algorithms, the choice of mutation is critical to the performance.
Since our binary search space is different from NASNet search space where Regulairzed Evolution(RE) was originally applied, we modify mutation steps.
All possible mutations proposed in~\citep{real2018regularized} can be represented as simple binary flipping in binary search space.
In binary search space, some binary variables are about computation type and others are about connectivity.
Thus uniform-randomly choosing binary variable to flip can mutate computation type or connectivity.
Since binary search space is redundant we did not explicitly include identity mutation (not mutating).
Since evolutionary search algorithms are believed to be less sample efficient than BO, we gave an advantage to RE by only allowing valid architectures in mutation steps.

On other crucial hyperparameters, population size $P$ and sample size $S$, motivated by the best value used in~\citep{real2018regularized}, $P=100$, $S=25$.
We set our $P$ and $S$ to have similar ratio as the one originally proposed.
Since, we assumed less number of evaluations(260, 500) compared to 20000 in~\citep{real2018regularized}, we reduced $P$ and $S$.
In NAS on binary search space, we used $P=50$ and $S=15$.

\end{document}